%% file: main.tex
\documentclass[sigconf]{acmart}
\AtBeginDocument{%
  }

\setcopyright{acmlicensed}
\copyrightyear{2018}
\acmYear{2018}
\acmDOI{XXXXXXX.XXXXXXX}
\acmConference[Conference acronym 'XX]{Make sure to enter the correct
  conference title from your rights confirmation email}{June 03--05,
  2018}{Woodstock, NY}
\acmISBN{978-1-4503-XXXX-X/2018/06}
\usepackage{enumitem}
\usepackage{subcaption}
\usepackage{wrapfig}
\usepackage{algorithm}
\usepackage{algorithmic}

\begin{document}

\title{\method{}: Universal Structural Basis Distillation for Source-Free Graph Domain Adaptation}

\author{Yingxu Wang}
\authornote{Both authors contributed equally to this research.}
\affiliation{%
  \institution{Mohamed bin Zayed University of Artificial Intelligence}
  \country{}
}
\email{yingxv.wang@gmail.com}

\author{Kunyu Zhang}
\authornotemark[1]
\affiliation{%
  \institution{Zhengzhou University}
  \country{}
}
\email{kunyu.zky@gmail.com}

\author{Mengzhu Wang}
\affiliation{%
  \institution{The Hebei University of Technology}
  \country{}
}
\email{dreamkily@gmail.com}

\author{Siyang Gao}
\affiliation{%
  \institution{The City University of Hong Kong}
  \country{}
}
\email{siyangao@cityu.edu.hk}

\author{Nan Yin}
\affiliation{%
  \institution{The City University of Hong Kong}
  \country{}
}
\email{yinnan8911@gmail.com}

\renewcommand{\shortauthors}{Trovato et al.}

\begin{abstract}
Source-Free Graph Domain Adaptation (SF-GDA) is pivotal for privacy-preserving knowledge transfer across graph datasets. Although recent works incorporate structural information, they implicitly condition adaptation on the smoothness priors of source-trained GNNs, thereby limiting their generalization to structurally distinct targets. This dependency becomes a critical bottleneck under significant topological shifts, where the source model misinterprets distinct topological patterns unseen in the source domain as noise, rendering pseudo-label-based adaptation unreliable.
To overcome this limitation, we propose the \textbf{U}niversal \textbf{S}tructural \textbf{B}asis \textbf{D}istillation (\method{}), a framework that shifts the paradigm from adapting a biased model to learning a universal structural basis for SF-GDA. 
Instead of adapting a biased source model to a specific target, our core idea is to construct a structure-agnostic basis that proactively covers the full spectrum of potential topological patterns. Specifically, \method{} employs a bi-level optimization framework to distill the source dataset into a compact structural basis. By enforcing the prototypes to span the full Dirichlet energy spectrum, the learned basis explicitly captures diverse topological motifs, ranging from low-frequency clusters to high-frequency chains, beyond those present in the source. This ensures that the learned basis creates a comprehensive structural covering capable of handling targets with disparate structures. For inference, we introduce a spectral-aware ensemble mechanism that dynamically activates the optimal prototype combination based on the spectral fingerprint of the target graph. Extensive experiments on benchmarks demonstrate that \method{} significantly outperforms state-of-the-art methods, particularly in scenarios with severe structural shifts, while achieving superior computational efficiency by decoupling the adaptation cost from the target data scale. 

\end{abstract}

\begin{CCSXML}
<ccs2012>
 <concept>
  <concept_id>10010520.10010553.10010562</concept_id>
  <concept_desc> Mathematics of computing~ Graph algorithms</concept_desc>
  <concept_significance>500</concept_significance>
 </concept>
 <concept>
  <concept_id>10003033.10003083.10003095</concept_id>
  <concept_desc> Computing methodologies ~Neural networks</concept_desc>
  <concept_significance>100</concept_significance>
 </concept>
</ccs2012>
\end{CCSXML}

\ccsdesc[500]{Mathematics of computing~Graph algorithms}
\ccsdesc[100]{Computing methodologies~Neural networks}

\keywords{}

\received{20 February 2007}
\received[revised]{12 March 2009}
\received[accepted]{5 June 2009}

\def\method{USBD}

\maketitle

\input{code/1_intro}
\input{code/2_related_work}

\input{code/4_method}
\input{code/5_experiments}

\input{code/6_conclusion}

\bibliographystyle{plain}
\bibliography{reference}

\appendix
\input{code/7_appendix}
\end{document}

%% file: code/1_intro.tex
\section{Introduction}

Source-Free Graph Domain Adaptation (SF-GDA) has emerged as a critical technique for transferring knowledge across graph datasets while protecting data privacy \citep{liang2020we, mao2024source}. Unlike traditional methods that require accessing both source and target data, SF-GDA operates in a stricter setting where the original source data is unavailable during adaptation. 
The mainstream SF-GDA paradigm adopts a decoupled two-stage framework \cite{zhang2024collaborate, mao2024source}. Initially, a model is optimized on the source domain to encapsulate transferable knowledge; subsequently, this source-trained model serves as an initialization for the target domain, where it is fine-tuned exclusively using unlabeled target data.
This setting is particularly valuable in privacy-sensitive applications, such as medical diagnosis (e.g., analyzing patient networks across hospitals) \cite{mao2024source} and molecular research \cite{zhang2024collaborate,jin2022empowering,luo2024rank}, where sharing raw data is often restricted by legal regulations or commercial interests.

To address the absence of source data, existing SF-GDA approaches generally follow two paradigms: feature-centric alignment and pseudo-label self-training. The former aligns feature distributions through information maximization or clustering objectives \cite{yang2021exploiting,yang2022attracting}, aiming to cluster target nodes near source-learned decision boundaries. Conversely, the latter incorporates structural priors to refine noisy predictions, typically by enforcing structural consistency \cite{mao2024source} or employing spectral ranking \cite{luo2024rank}. Despite these efforts, a fundamental limitation persists across both paradigms because they rely heavily on the inductive biases of the source model. Operating as low-pass filters \cite{wu2019simplifying}, these models implicitly assume that target graphs share similar spectral characteristics with the source, such as smooth signal transitions across the topology \cite{zhu2021shift}. Consequently, when the target domain exhibits severe topological distribution shifts, the source model’s spectral filters fail to capture the high-frequency patterns and treat them as noise. This misalignment renders pseudo-labels unreliable, leading to catastrophic error accumulation during the adaptation process.

While existing methods have made strides in feature alignment, adapting to disparate structures without source data remains a formidable task. Prior works fail to surmount three fundamental challenges:
\textit{\textbf{(1) Unobservability of Structural Shifts}}. Due to the unavailability of source data, measuring explicit structural distance between domains is infeasible \cite{mao2024source}. Existing methods rely on source model parameters as a proxy for source knowledge \cite{yang2021exploiting}. However, these parameters only encapsulate the spectral support of the source domain \cite{wu2019simplifying} and remain incapable of representing out-of-distribution structural patterns required by the target \cite{zhu2021shift, wu2022handling}.
\textit{\textbf{(2) Self-Reinforcing Spectral Bias}}. GNNs fundamentally function as graph spectral filters \cite{wu2019simplifying}. Since the source model is typically initialized as a low-pass filter, it cannot teach itself to recognize high-frequency structures through self-training because it systematically suppresses the signals needed for adaptation \cite{nt2019revisiting, bo2021beyond}. Consequently, prior works are trapped in a feedback loop and cannot break free from the inductive bias of the source \cite{zhu2020beyond}.
\textit{\textbf{(3) Trade-off between Efficiency and Universality}}. A robust SF-GDA solution should possess universality to handle arbitrary target structures \cite{wu2022handling}. However, current target-specific approaches necessitate computationally expensive retraining or iterative optimization for every new deployment \cite{jin2022empowering, jin2020graph}. Achieving universality without prohibitive computational costs remains a significant open problem.

\begin{figure}
\centering
\includegraphics[scale=0.2]{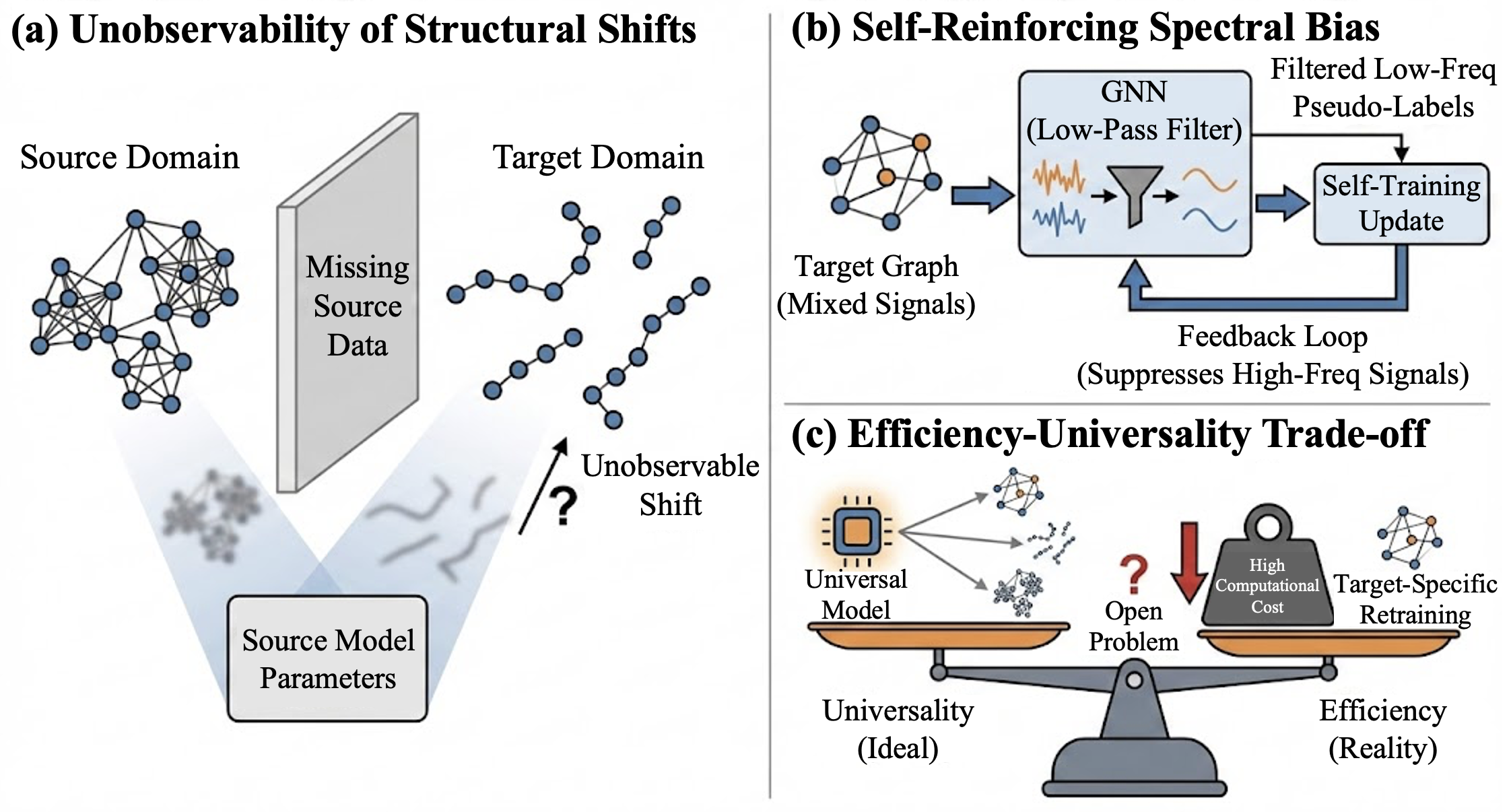}
\caption{The key challenges in SF-GDA. (a) Unobservability of structural shifts arises from missing source data, where source model fails to represent OOD target patterns. (b) The spectral bias dilemma: GNN low-pass filters remove necessary high-frequency signals, leading to a self-reinforcing error loop. (c) The efficiency-universality trade-off highlights the difficulty of handling arbitrary target structures without high computational costs during deployment.}
\vspace{-0.5cm}
\label{fig:challenge}
\end{figure}

In this paper, we propose a paradigm shift from relying on specific targets to building a universal covering model. To achieve this, we introduce a framework: Universal Structural Basis Distillation (\method{}) for source-free graph domain adaptation. 
Rather than adapting a biased model to structurally disparate targets, \method{} distills source knowledge into a compact set of synthetic prototype graphs that serve as a structural basis. 
Specifically, we address structural unobservability and spectral bias through a bi-level optimization framework. This hierarchical design explicitly optimizes the spectrum-conditional generator to maximize structural diversity. We then enforce the synthesized prototypes to span the full Dirichlet energy spectrum, compelling the generator to produce varied topological motifs. These patterns range from low-frequency clusters to high-frequency chains, effectively constructing a "universal toolset" that covers potential target structures. 
To address the efficiency-universality trade-off, we introduce a spectral-aware ensemble mechanism for inference. This allows the model to dynamically re-weight and activate the optimal combination of basis prototypes based on the spectral fingerprint of target graphs, enabling instant adaptation to arbitrary unseen targets without retraining.
Extensive experiments demonstrate that \method{} outperforms state-of-the-art methods on benchmarks with severe structural shifts, confirming the efficacy of our universal basis perspective.

Our contributions can be summarized as follows:
\begin{itemize}[itemsep=2pt,topsep=0pt,parsep=0pt,leftmargin=*]
\item We formally identify three fundamental bottlenecks in source-free structural adaptation: structural unobservability, self-reinforcing spectral bias, and the efficiency-universality trade-off.
\item We propose \method{}, a bi-level optimization framework that shifts from passive adaptation to proactive universal covering. By driving a spectrum-conditional generator to synthesize a full-spectrum "universal toolset," the proposed \method{} effectively covers diverse topological motifs even in the absence of source data.
\item Extensive experiments demonstrate that \method{} achieves state-of-the-art performance across benchmarks. Crucially, it resolves the efficiency dilemma by handling structural disparities without requiring computationally expensive target-specific retraining.
\end{itemize}

%% file: code/2_related_work.tex
\section{Related Work}

\textbf{{Source-Free Graph Domain Adaptation (SF-GDA).}} SF-GDA addresses the practical requirement of adapting models to target graphs without access to source data, thereby preserving data privacy and reducing storage overhead~\cite{luo2023source, luo2024gala,yu2025samgpt}. Early efforts in this area primarily adapt general source-free domain adaptation techniques, such as information maximization and pseudo-label self-training, to graph data to better align with graph-structured learning scenarios~\cite{mao2024source, luo2024rank, qu2024lead}. Subsequent works incorporate graph-specific inductive biases by enforcing structural consistency or leveraging graph contrastive learning to better handle domain shifts in the absence of source supervision~\cite{wu2024graph, zhang2024collaborate,wang2024degree}. Despite these advances, existing methods largely adhere to a model-centric adaptation paradigm, where successful transfer critically hinges on the initial transferability of source-pretrained parameters~\cite{yang2021exploiting, mao2024source}. When domain shifts are severe, particularly in the presence of substantial structural discrepancies, the source model often fails to produce reliable initial representations, resulting in noisy pseudo-labels on target domain and progressive error accumulation during self-training~\cite{zhao2024multi,zhang2025aggregate}. To mitigate this limitation, we propose \method{}, a data-centric framework that reduces reliance on biased source-pretrained parameters by distilling source knowledge into a spectrum-spanning graph basis, thereby enabling robust and efficient generalization across structurally disparate target domains

\begin{figure*}[t]
\centering
\includegraphics[scale=0.24]{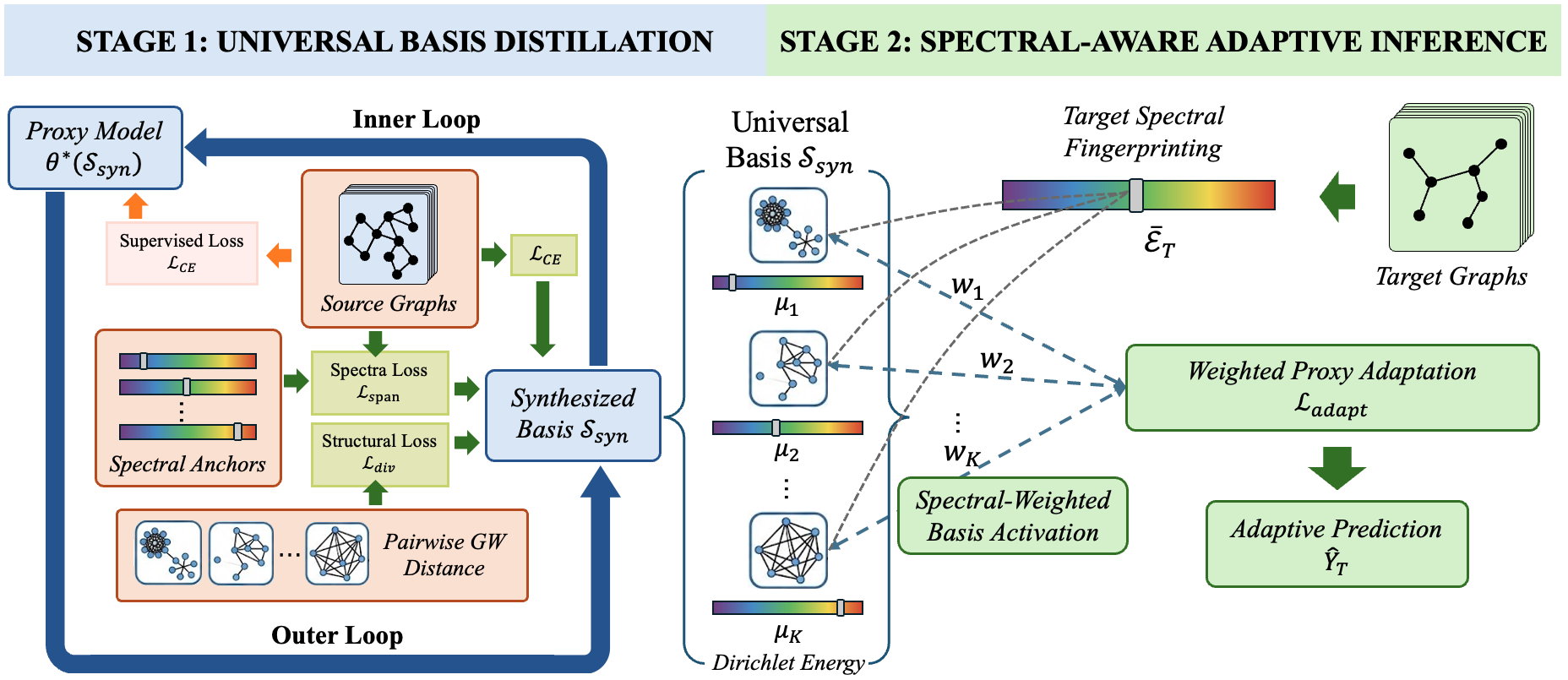}
\caption{Proposed USBD comprises two stages: (a) Universal Structural Basis Distillation: A bi-level optimization framework distills source knowledge into compact synthetic prototypes spanning the full Dirichlet energy spectrum to capture diverse topological motifs beyond the source domain. (b) Spectral-Aware Adaptive Inference: The module identifies the target spectral fingerprint and dynamically activates optimal basis combinations of the learned prototypes for instant adaptation.}
\label{fig:framework}
\end{figure*}

\textbf{{Out-of-Distribution (OOD) Generalization on Graphs.}}
OOD generalization on graphs focuses on improving the robustness of Graph Neural Networks (GNNs) to distribution shifts in graph-structured data, a property that is critical for real-world applications such as dynamic social networks, molecular discovery, and recommendation systems, where graph distributions evolve over time~\cite{li2022learning, yao2024empowering, Wang2025ProtoMolEM}. Most existing OOD generalization approaches focus on invariant learning, which aims to identify stable substructures that remain predictive across diverse environments~\cite{yao2025pruning, li2025out,Wang2025NestedGP}. While theoretically grounded, such methods often incur substantial computational overhead and struggle to model complex global structural shifts that extend beyond localized invariances~\cite{guo2024investigating, zhang2024survey}. More recently, a spectral perspective has emerged as an effective lens for characterizing graph distribution shifts, revealing that different domains often exhibit distinct spectral energy profiles~\cite{zhang2023spectral, gu2025spectralgap,liu2024rethinking}. However, existing methods typically rely on the limited spectral support of the source data during training and therefore fail to extrapolate to target domains that occupy disjoint regions of the spectral energy landscape~\cite{liu2023good, du2024and,wang2025bridging}. To overcome this limitation, we propose \method{}, a spectral extrapolation framework that transcends the restricted support of source data by synthesizing a universal graph basis explicitly constrained to span the full energy spectrum, thereby enabling robust generalization to unseen target domains with disjoint spectral profiles.

%% file: code/4_method.tex
\section{Methodology}

\subsection{Problem Formulation}
We consider the problem of Source-Free Graph Domain Adaptation (SF-GDA) for graph classification. Formally, let $\mathcal{D}_S = \{(\mathcal{G}_i^s, y_i^s)\}_{i=1}^{N_S}$ denote a labeled source domain and $\mathcal{D}_T = \{\mathcal{G}_j^t\}_{j=1}^{N_T}$ an unlabeled target domain, where each graph instance $\mathcal{G} = (\mathcal{V}, E, \mathbf{X})$ is associated with a graph-level label $y \in \mathcal{Y}$. A key challenge lies in the structural disparity between domains, where the underlying graph generative distributions differ, i.e., $P_S(\mathcal{G}) \neq P_T(\mathcal{G})$. Unlike standard adaptation settings, we operate under strict source-free constraints where $\mathcal{D}_S$ is inaccessible during adaptation; only a GNN model $f_S$ pre-trained on $\mathcal{D}_S$ is provided. Consequently, our objective is to adapt the fixed source model $f_S$ to the target domain using only the unlabeled graphs in $\mathcal{D}_T$, maximizing classification accuracy by effectively aligning the disparate structural distributions.

\subsection{Overview of \method{}}
In this paper, we propose a novel framework \method{} for SF-GDA, as shown in Figure.~\ref{fig:framework}. \method{} comprises two key components: 
(1) \textbf{Universal Structural Basis Distillation.} To address structural unobservability and spectral bias, this module employs a bi-level optimization framework to distill source knowledge into a dictionary of prototypes. By explicitly constraining the synthetic basis to cover the full Dirichlet energy interval, we provide explicit guidance for the optimization of synthetic tensors and ensure representation completeness for arbitrary topological patterns.
(2) \textbf{Spectral-Aware Adaptive Inference.} To resolve the efficiency-universality trade-off, we introduce a spectral-aware ensemble mechanism that dynamically calibrates prototypes based on the spectral fingerprint of target graphs. This strategy minimizes the structural covering discrepancy, which is a tight upper bound on the target risk, enabling instant and robust adaptation without expensive retraining.

\subsection{Universal Structural Basis Distillation}

The primary challenge in SF-GDA lies in the structural unobservability of the target domain \cite{liang2020we, fang2023sourcefreeunsuperviseddomainadaptation}, obscuring potential topological shifts and leading to significant distribution discrepancies \cite{zhu2020beyond, chien2021adaptive}. Existing generative methods typically address this by maximizing source likelihood to synthesize prototypes \cite{kipf2016variational, you2018graphrnn}. However, this paradigm suffers from \textit{spectral bias} \cite{rahaman2019spectral}. Since source domains often exhibit a limited frequency range, optimization tends to overfit this specific profile. Consequently, spectral analysis \cite{nt2019revisiting} reveals these models function inherently as low-pass filters, constraining their ability to synthesize diverse structural motifs necessary for Out-of-Distribution (OOD) topological patterns. To transcend this, we propose {Universal Structural Basis Distillation}, a bi-level learning framework decoupling basis construction from source density. By introducing explicit Dirichlet energy anchors, we compel the optimization to extrapolate beyond source support and synthesize orthogonal prototypes spanning the full spectral interval, constructing a complete structural basis robust to arbitrary shifts.

\subsubsection{{Universal Basis Condensation Formulation.}}
We aim to synthesize a compact universal basis $\mathcal{S}_{syn} = \{(A_k, X_k, Y_k)\}_{k=1}^K$ that encapsulates the structural diversity of the source domain while spanning the full spectral spectrum. In formulation, $\mathcal{S}_{syn}$ denotes a learnable structural manifold, explicitly optimized to cover the topological gap between low- and high-frequency patterns. To enforce spectral coverage without compromising semantic fidelity, we formulate the basis construction as a bi-level learning problem. Let $\theta$ denote the parameters of a proxy GNN classifier. We define the optimization objective as the following hierarchical problem:
\begin{subequations}
\begin{align}
    \min_{\mathcal{S}_{syn}} \mathcal{L}_{\text{meta}} &={\mathcal{L}_{\text{sem}}(\mathcal{S}_{syn})} + \lambda_1\mathcal{L}_{\text{span}}(\mathcal{S}_{syn}) + \lambda_2\mathcal{L}_{\text{div}}(\mathcal{S}_{syn}), \label{eq:outer_meta} \\
    \text{s.t. } \theta^*(&\mathcal{S}_{syn}) = \operatorname*{arg\,min}_{\theta} \mathcal{L}_{\text{CE}}(\text{GNN}(\mathcal{S}_{syn}; \theta), Y_{syn}). \label{eq:inner_proxy}
\end{align}
\end{subequations}
Here, $\mathcal{L}_{\text{CE}}$ denotes the standard Cross-Entropy loss function. Eq.~(\ref{eq:inner_proxy}) constrains the proxy model $\theta^*$ to be the optimal solution on the synthetic basis, establishing a link between the basis structure and model performance. Eq.~(\ref{eq:outer_meta}) drives the basis update, balancing the preservation of source semantics with the maximization of spectral and structural coverage.

\paragraph{\textbf{Semantic Fidelity via Meta-Matching.}}
The term $\mathcal{L}_{\text{sem}}$ serves as a semantic anchor, ensuring that any structural variation introduced by the condensation process does not degrade class discriminability. Since the target domain is unobserved, we leverage the real source data $\mathcal{D}_S$ as the ground truth. The semantic loss is computed by evaluating the proxy model $\theta^*$ (trained on $\mathcal{S}_{syn}$) on the real source distribution:
\begin{equation}
    \mathcal{L}_{\text{sem}}(\mathcal{S}_{syn}) = \mathbb{E}_{(G, y) \sim \mathcal{D}_S} \left[ \mathcal{L}_{\text{CE}}(\text{GNN}(G; \theta^*(\mathcal{S}_{syn})), y) \right].
\end{equation}
This formulation treats the synthetic basis as hyper-parameters of the learning process. By differentiating through the inner loop Eq.~(\ref{eq:inner_proxy}), we force the synthetic graphs to distill the essential decision boundaries of the source domain, ensuring that they remain semantically valid prototypes rather than random noise.

\paragraph{\textbf{Spectral and Structural Regularization.}}
To strictly align the basis with the target's frequency profiles, we introduce a set of spectral anchors $\mathcal{M} = \{\mu_1, \dots, \mu_K\}$ that linearly span the Dirichlet energy interval $[0, E_{\max}]$. We impose a hard constraint on the generated topology, forcing the Dirichlet energy $\mathcal{E}(G_k)$ of each basis graph $G_k$ to match its corresponding topological coordinate $\mu_k$:
\begin{equation}
    \mathcal{L}_{\text{span}} = \sum_{k=1}^{K} \left\| \frac{\operatorname{Tr}(X_k^\top \hat{L}_k X_k)}{\|X_k\|_F^2} - \mu_k \right\|^2,
\end{equation}
where $\hat{\mathbf{L}}_k$ denotes the normalized Laplacian of $A_k$. This term explicitly steers the condensation process. Even when the source domain $\mathcal{D}_S$ is dominated by low-frequency signals, $\mathcal{L}_{\text{span}}$ enforces the synthesis of high-energy prototypes while preserving semantic consistency, thereby mitigating source spectral bias.

However, spectral diversity alone does not ensure structural uniqueness. While $\mathcal{L}_{\text{span}}$ enforces coverage across the frequency domain, it fails to penalize topological redundancy. To mitigate the risk of mode collapse, where multiple basis graphs degenerate into identical geometries, we complementarily maximize the pairwise Gromov-Wasserstein (GW) distance between basis elements:
\begin{equation}
    \mathcal{L}_{\text{div}} = - \sum_{i} \sum_{j > i} GW(A_i, A_j).
\end{equation}
By maximizing the pairwise Gromov-Wasserstein discrepancy, we explicitly enforce topological distinctiveness among basis elements. This maximization of structural variance prevents redundancy and effectively expands the {support} of the synthesized basis, ensuring comprehensive coverage of the potential target distribution.

\subsubsection{{Theoretical Guarantee of Universality.}}
The proposed bi-level optimization explicitly constructs a discrete set of structural prototypes. However, a fundamental theoretical question remains: \textit{Can a finite basis set $\mathcal{S}_{syn}$ sufficiently cover the continuous and potentially infinite space of target structural shifts?} Since the target domain is unobserved during training, we must ensure that for any arbitrary target graph, regardless of its position on the spectral energy spectrum, there exists a close match within our basis to guarantee effective adaptation. To address this, we establish the following property, which provides a theoretical error bound on the spectral coverage of our distilled basis.

\begin{theorem}[Universal Spectral Covering Property]
\label{thm:coverage}
Let $\mathcal{H}_{\text{spec}} = [0, E_{\max}]$ be the bounded spectral domain of valid Dirichlet energies. Assume the synthesized basis $\mathcal{S}_{syn} = \{G_k\}_{k=1}^K$ is optimized such that the spectral coverage loss $\mathcal{L}_{\text{span}}$ converges to zero. Let the anchors $\{\mu_k\}_{k=1}^K$ be uniformly distributed over $\mathcal{H}_{\text{spec}}$ with a grid spacing of $\delta = \frac{E_{\max}}{K-1}$.
For any unseen target graph $G_T$ with Dirichlet energy $\mathcal{E}(G_T) \in \mathcal{H}_{\text{spec}}$, there exists a basis prototype $G_{k^*} \in \mathcal{S}_{syn}$ such that the spectral approximation error is bounded by:
\begin{equation}
    \min_{G_k \in \mathcal{S}_{syn}} | \mathcal{E}(G_T) - \mathcal{E}(G_k) | \le \frac{\delta}{2} + \epsilon,
\end{equation}
where $\epsilon$ is the optimization residual.
\end{theorem}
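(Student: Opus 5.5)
The plan is a direct triangle-inequality argument that splits the error into a grid-discretization term and an optimization term. Throughout, $\mathcal{E}(G)=\operatorname{Tr}(X^\top \hat{L} X)/\|X\|_F^2$ is the normalized Dirichlet energy used in $\mathcal{L}_{\text{span}}$. First I will fix the meaning of ``$\mathcal{L}_{\text{span}}$ converges to zero'' quantitatively. Since $\mathcal{L}_{\text{span}}=\sum_k |\mathcal{E}(G_k)-\mu_k|^2$ is a sum of nonnegative terms, stopping the optimization at a residual $\mathcal{L}_{\text{span}}\le \epsilon^2$ gives, for every $k$,
\begin{equation*}
|\mathcal{E}(G_k)-\mu_k| \le \sqrt{\mathcal{L}_{\text{span}}} \le \epsilon .
\end{equation*}
So I will define $\epsilon := \sqrt{\mathcal{L}_{\text{span}}}$, or equivalently $\epsilon:=\max_k|\mathcal{E}(G_k)-\mu_k|$. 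Either choice makes the bound uniform over all prototypes, which is exactly what is needed, since the matching prototype will depend on the target.

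Second, I will handle the discretization term. The anchors are $\mu_k=(k-1)\delta$ for $k=1,\dots,K$ with $\delta=E_{\max}/(K-1)$, so they partition $[0,E_{\max}]$ into $K-1$ closed cells $[\mu_k,\mu_{k+1}]$ of length $\delta$. Take $e:=\mathcal{E}(G_T)\in[0,E_{\max}]$. Then $e$ lies in some cell $[\mu_j,\mu_{j+1}]$. Choose $k^*\in\{j,j+1\}$ to be the endpoint nearer to $e$; this gives $|e-\mu_{k^*}|\le \delta/2$. The boundary cases $e=0$ and $e=E_{\max}$ are covered because $\mu_1=0$ and $\mu_K=E_{\max}$ are themselves anchors.

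Third, I will combine the two terms:
\begin{equation*}
\min_k |e-\mathcal{E}(G_k)| \le |e-\mathcal{E}(G_{k^*})| \le |e-\mu_{k^*}| + |\mu_{k^*}-\mathcal{E}(G_{k^*})| \le \tfrac{\delta}{2}+\epsilon .
\end{equation*}
This is the claimed bound, and it also exhibits the prototype $G_{k^*}$ explicitly.

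The main obstacle is not the algebra. It is making the hypotheses precise enough for the argument to be honest. ``Converges to zero'' must be replaced by the finite residual $\epsilon$ above. I also need $K\ge 2$ so that $\delta$ is defined. Finally, the target energy must be measured with the same normalized quantity as the prototypes, which the hypothesis $\mathcal{E}(G_T)\in\mathcal{H}_{\text{spec}}$ gives. I will note that with the normalized Laplacian this quantity is a Rayleigh quotient, so it lies in $[0,2]$ automatically, and taking $E_{\max}\le 2$ makes the assumption natural. I will not need any property of $\mathcal{L}_{\text{sem}}$ or $\mathcal{L}_{\text{div}}$. I will remark that the bound uses only the spectral anchoring and says nothing about semantic or structural closeness beyond energy.
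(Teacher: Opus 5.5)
Your proposal is correct and follows the same route as the paper's proof. Both pick the nearest anchor $\mu_{k^*}$ so that $|\mathcal{E}(G_T)-\mu_{k^*}|\le\delta/2$, use the uniform residual $|\mathcal{E}(G_k)-\mu_k|\le\epsilon$, apply the triangle inequality, and bound the minimum by this particular prototype; your choice $\epsilon:=\sqrt{\mathcal{L}_{\text{span}}}$ and the cell-endpoint argument make precise what the paper states informally.
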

Theorem~\ref{thm:coverage} provides a rigorous theoretical certification for the universality of our framework. It implies that as the basis size $K$ increases (reducing $\delta$) and optimization converges (minimizing $\epsilon$), the synthesized set $\mathcal{S}_{syn}$ asymptotically forms a dense $\delta$-cover of the entire spectral manifold. Consequently, this result circumvents the fundamental hurdle of target structural unobservability: it guarantees that any unseen target, regardless of its position on the spectral energy spectrum, falls within the reliable support of our pre-computed basis. This theoretical completeness legitimizes our inference paradigm, ensuring that the adaptation phase is reduced to a robust matching problem rather than a blind search.

\subsection{Spectral-Aware Adaptive Inference}

With the Universal Structural Basis $\mathcal{S}_{syn}$ constructed, the critical challenge during inference shifts to balancing adaptation universality with computational efficiency. Existing SF-GDA paradigms typically struggle to reconcile these conflicting objectives. On one hand, static models or lightweight projections \cite{ijcai2021p402} offer high inference speed but lack the structural flexibility to handle drastic topological shifts across the spectral energy spectrum, limiting their universality. On the other hand, extensive Test-Time Adaptation (TTA) methods \cite{liu2022confidence} can improve coverage via iterative self-training, but they incur prohibitive computational costs and latency, rendering them impractical for real-time deployment. This creates a fundamental \textit{efficiency-universality challenge}: how to achieve robust adaptation across arbitrary target structures without conducting expensive gradient updates on high-dimensional target data?

To resolve this challenge, we propose {Spectral-Aware Adaptive Inference}, a framework that harmonizes universality with efficiency by reformulating adaptation as a {Weighted Proxy Learning} task. Rather than conducting expensive iterative optimization on the target domain, we utilize the pre-computed Universal Basis $\mathcal{S}_{syn}$ as a compact, spectrally-indexed dictionary. Upon extracting the target's spectral fingerprint, we derive a sparse weight vector to dynamically activate the relevant basis prototypes. This effectively transforms the intractable adaptation problem into a rapid, lightweight calibration on a condensed synthetic set. Our strategy thereby guarantees {universality} through the basis's dense spectral coverage, while maximizing {efficiency} by obviating the need for recurrent back-propagation on target graphs.

\subsubsection{{Target Spectral Fingerprinting.}}
To profile the structural shift of the unobserved target domain $\mathcal{D}_T$, we compute its global spectral signature. We adopt the \textit{Normalized Dirichlet Energy} as a scale-invariant metric to quantify the operating point of the target distribution. Specifically, we aggregate the energy of all graphs $G_j = (\mathbf{A}_j, \mathbf{X}_j) \in \mathcal{D}_T$ to obtain the domain-level fingerprint $\bar{\mathcal{E}}_T$:
\begin{equation}
\label{Fingerprinting}
    \bar{\mathcal{E}}_T = \frac{1}{|\mathcal{D}_T|} \sum_{G_j \in \mathcal{D}_T} \frac{\operatorname{Tr}(\mathbf{X}_j^\top \hat{\mathbf{L}}_j \mathbf{X}_j)}{\|\mathbf{X}_j\|_F^2},
\end{equation}
where $\hat{\mathbf{L}}_j$ denotes the normalized Laplacian of $A_j$. This scalar $\bar{\mathcal{E}}_T$ serves as a robust topological anchor that effectively summarizes the global bias of the target domain, independent of semantic labels.

\subsubsection{{Spectral-Weighted Basis Activation.}}
To strictly align the inference model with the target's specific spectral profile, we employ a \textit{soft activation} mechanism over the Universal Structural Basis $\mathcal{S}_{syn} = \{(A_k, X_k, Y_k)\}_{k=1}^K$. Since the spectral manifold is continuous, the optimal source knowledge for a specific $\bar{\mathcal{E}}_T$ typically corresponds to a weighted combination of discrete basis prototypes rather than a single element.
We utilize a Kernel-based Spectral Attention to compute the importance weights $\mathbf{w} \in \mathbb{R}^K$. For each basis graph, the activation weight $w_k$ is derived from the proximity between its anchor energy $\mu_k$ and the target fingerprint $\bar{\mathcal{E}}_T$:
\begin{equation}
    w_k = \frac{\exp\left( - \frac{\| \bar{\mathcal{E}}_T - \mu_k \|^2}{2\sigma^2} \right)}{\sum_{j=1}^K \exp\left( - \frac{\| \bar{\mathcal{E}}_T - \mu_j \|^2}{2\sigma^2} \right)},
    \label{eq:attention_weights}
\end{equation}
where $\sigma$ acts as a temperature parameter controlling the bandwidth of the spectral filter. A smaller $\sigma$ enforces strict nearest-neighbor matching, while a larger $\sigma$ encourages broader basis participation.

Guided by these adaptive weights, rather than physically synthesizing a composite graph, we formulate the adaptation as a Weighted Proxy Learning task. Specifically, we instantiate a target-specific classifier $\phi$ and optimize it by minimizing the importance-weighted cross-entropy loss over the synthetic basis:
\begin{equation}
    \min_\phi \mathcal{L}_{\text{adapt}}(\phi) = \sum_{k=1}^K w_k \cdot \mathcal{L}_{\text{CE}}(\text{GNN}(A_k, X_k; \phi), Y_k).
    \label{eq:weighted_proxy}
\end{equation}
This objective effectively re-weights the source knowledge distribution. It forces the proxy model to prioritize decision boundaries derived from prototypes that are spectrally isomorphic to the target domain, thereby achieving accurate adaptation without exposing the model to noisy pseudo-labels from the unobserved target data.

\subsubsection{{Adaptive Prediction.}}
Finally, prediction is performed by feeding the original target graph $\mathcal{G}_T = (\mathbf{A}_T, \mathbf{X}_T)$ directly into the adapted proxy classifier $\phi^*$. Unlike traditional methods that require modifying the input structure to fit a frozen model, our approach adapts the model decision boundary to the target's geometry.
\begin{equation}
    \hat{\mathbf{Y}}_T = \text{GNN}(\mathbf{A}_T, \mathbf{X}_T; \phi^*),
\end{equation}
where $\phi^*$ is the optimal solution from Eq.~(\ref{eq:weighted_proxy}). Since $\phi^*$ is trained on the synthesized basis that strictly matches the target's spectral fingerprint $\bar{\mathcal{E}}_T$, it inherently generalizes to the unseen target distribution. This inference paradigm bypasses the risk of "structure-mismatch" error, ensuring that the classification logic is fully compatible with the target's specific spectral energy profile.

Crucially, this architecture resolves the efficiency-universality trade-off through two mechanisms. Regarding \textbf{Efficiency}, the adaptation cost is decoupled from the target data scale. Since the optimization of $\phi^*$ (Eq.~\ref{eq:weighted_proxy}) is restricted to the compact basis $\mathcal{S}_{syn}$ (where $|\mathcal{S}_{syn}| \ll |\mathcal{D}_T|$), the computational overhead is negligible compared to standard Test-Time Training (TTT) that requires back-propagation on high-dimensional target graphs. Regarding \textbf{Universality}, we differ fundamentally from approaches that attempt to heuristically {modify the input structure} to fit fixed smoothness assumptions. Such structural modification is often rigid and computationally expensive. In contrast, our approach {recalibrates the model's decision boundary} to accommodate the intrinsic geometry of the target. This mechanism allows the classifier to flexibly align with arbitrary spectral patterns across the full Dirichlet energy spectrum, thereby ensuring universal coverage without the artifacts introduced by forced structural changes.

\subsubsection{{Generalization Bound Analysis.}}
To substantiate the validity of our inference paradigm, we analyze the generalization bound on the target domain. Our objective is to prove that minimizing the spectral discrepancy between the target graph and the synthesized basis directly translates to a tighter upper bound on the prediction risk. The following theorem formalizes this dependency, establishing that the generalization capability of \method{} is strictly governed by the spectral covering density of the Universal Basis.

\begin{theorem}[Generalization Bound via Universal Covering]
\label{thm:generalization}
Let $f$ be the graph encoder and $h$ be the hypothesis (classifier). Let $\mathcal{R}_{\mathcal{T}}(h \circ f)$ denote the expected risk on the target domain, and $\hat{\mathcal{R}}_{\mathcal{S}_{syn}}(h \circ f)$ denote the empirical risk on the synthesized universal basis $\mathcal{S}_{syn}$.
Assume the loss function $\ell$ is $L_{\ell}$-Lipschitz continuous with respect to the graph representation, and the encoder $f$ is $L_{f}$-Lipschitz continuous with respect to the Dirichlet energy spectrum.
Then, with probability at least $1-\delta$, the target risk is bounded by:
\begin{equation}
    \mathcal{R}_{\mathcal{T}}(h \circ f) \leq \hat{\mathcal{R}}_{\mathcal{S}_{syn}}(h \circ f) + K_{\text{Lip}} \cdot \mathbb{E}_{G \sim \mathcal{D}_T} \left[ \min_{G_k \in \mathcal{S}_{syn}} | \mathcal{E}(G) - \mathcal{E}(G_k) | \right] + \lambda,\nonumber
\end{equation}
where $K_{\text{Lip}} = L_{\ell} L_{f}$ is the composite Lipschitz constant, and $\lambda$ represents the minimal combined risk of the optimal hypothesis.
\end{theorem}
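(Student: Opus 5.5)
We will follow the classical domain adaptation template of Ben-David et al.: decompose the target risk into a risk on a proxy distribution, plus a discrepancy term, plus an ideal joint risk. The proxy distribution is the one induced by the synthesized basis.

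\textbf{Step 1: nearest-neighbour coupling.} For every target graph $G\sim\mathcal{D}_T$, we define the projection $\pi(G)=\arg\min_{G_k\in\mathcal{S}_{syn}}|\mathcal{E}(G)-\mathcal{E}(G_k)|$. Ties are broken arbitrarily. This is a measurable map, so it pushes $\mathcal{D}_T$ forward to a discrete distribution $\pi_\#\mathcal{D}_T$ on the basis. This distribution plays the role of the activation weights $w_k$ in Eq.~(\ref{eq:weighted_proxy}) in the small-$\sigma$ limit. Next, let $h^\*$ be the hypothesis achieving $\lambda=\min_h[\mathcal{R}_{\mathcal{T}}(h\circ f)+\mathcal{R}_{\mathcal{S}_{syn}}(h\circ f)]$. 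We then write
\[
\mathcal{R}_{\mathcal{T}}(h\circ f)\le \mathcal{R}_{\mathcal{T}}(h^\*\circ f)+\mathbb{E}_{G\sim\mathcal{D}_T}\big[\ell(h\circ f(G),h^\*\circ f(G))\big],
\]
using the triangle inequality for the loss, as in the standard argument.

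\textbf{Step 2: transport to the basis via Lipschitzness.} We replace each $G$ by $\pi(G)$ inside the disagreement term. The assumptions state that $\ell$ is $L_\ell$-Lipschitz in the representation and that $f$ is $L_f$-Lipschitz with respect to Dirichlet energy, i.e.\ $\|f(G)-f(G')\|\le L_f|\mathcal{E}(G)-\mathcal{E}(G')|$. Hence
\[
\big|\ell(h\circ f(G),h^\*\circ f(G))-\ell(h\circ f(\pi G),h^\*\circ f(\pi G))\big|\le K_{\mathrm{Lip}}\,\min_{k}|\mathcal{E}(G)-\mathcal{E}(G_k)|,
\]
possibly up to a factor of 2 absorbed into $K_{\mathrm{Lip}}$. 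Taking expectations produces exactly the covering term $K_{\mathrm{Lip}}\,\mathbb{E}_{G\sim\mathcal{D}_T}[\min_k|\mathcal{E}(G)-\mathcal{E}(G_k)|]$.

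\textbf{Step 3: close the decomposition.} The remaining term is the disagreement evaluated on the basis. We bound it by $\mathcal{R}_{\mathcal{S}_{syn}}(h\circ f)+\mathcal{R}_{\mathcal{S}_{syn}}(h^\*\circ f)$, again by the triangle inequality. Combining with the $h^\*$ target risk yields $\lambda$. Finally, $\mathcal{R}_{\mathcal{S}_{syn}}$ must be replaced by its empirical version $\hat{\mathcal{R}}_{\mathcal{S}_{syn}}$. Because $\mathcal{S}_{syn}$ is a finite weighted set, the expected and empirical risks coincide when we take the weighted empirical risk. This is where the ``with probability $1-\delta$'' enters. If a sampling view is wanted, a Hoeffding or Rademacher term can be added and absorbed, with the remark that it vanishes for the deterministic finite basis.

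\textbf{Main obstacle.} The delicate step is Step 2. The Lipschitz assumption on $f$ is with respect to a scalar energy, so it effectively forces the encoder to factor through $\mathcal{E}$. Two graphs with equal energy then get nearby representations, and this is exactly what licenses the nearest-energy coupling. I would make this explicit as the interpretation of the hypothesis. I would also check that the label-shift part is entirely carried by $\lambda$, so that no extra term is needed. Optionally, Theorem~\ref{thm:coverage} can then be invoked to bound the covering term by $K_{\mathrm{Lip}}(\delta/2+\epsilon)$.
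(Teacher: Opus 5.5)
Your route is genuinely different from the paper's. The paper never introduces a joint-optimal $h^{*}$. It applies $a\le b+|a-b|$ directly to the \emph{labelled} loss and bounds $|\ell(h(f(G)),y)-\ell(h(f(\pi(G))),y)|\le L_\ell L_f|\mathcal{E}(G)-\mathcal{E}(\pi(G))|$ using exactly the two stated Lipschitz hypotheses. It then takes expectations and asserts that $\mathbb{E}_{G\sim\mathcal{D}_T}[\ell(h(f(\pi(G))),y)]$ is at most the empirical basis risk plus a ``const'' from standard concentration, with that constant standing in for $\lambda$. This yields exactly $K_{\text{Lip}}=L_\ell L_f$ and works for any loss meeting the hypothesis, cross-entropy included. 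Its last step is loose, however: the label inside $\ell(h(f(\pi(G))),y)$ is still the target label of $G$, not the prototype label $Y_k$. Concentration alone cannot relate that quantity to $\hat{\mathcal{R}}_{\mathcal{S}_{syn}}$; the label mismatch is exactly what $\lambda$ must pay for. Your Ben-David decomposition handles this honestly. Since you transport only the disagreement $\ell(h\circ f,h^{*}\circ f)$, target labels are never attached to prototypes, and $\lambda$ genuinely emerges as $\mathcal{R}_{\mathcal{T}}(h^{*}\circ f)+\mathcal{R}_{\mathcal{S}_{syn}}(h^{*}\circ f)$.

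The price is that you prove a slightly different statement.

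You need $\ell$ to be symmetric and to satisfy the triangle inequality on outputs. This excludes the cross-entropy the method actually optimizes.

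The stated hypothesis only controls $z\mapsto\ell(h(z),y)$ for a fixed label $y$. Lipschitzness of $z\mapsto\ell(h(z),h^{*}(z))$ needs the same bound for $h^{*}$, i.e.\ uniformly over the hypothesis class, and then gives $2L_\ell L_f$. Since the statement fixes $K_{\text{Lip}}=L_\ell L_f$, the factor $2$ cannot simply be absorbed. To keep $L_\ell L_f$, do the paper's transport on the labelled loss first, then pay for the swap $y\mapsto Y_{\pi(G)}$ with $\mathbb{E}_{G\sim\mathcal{D}_T}[\ell(Y_{\pi(G)},y)]$; this changes the meaning of $\lambda$.

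Both $\hat{\mathcal{R}}_{\mathcal{S}_{syn}}$ and $\lambda$ must be computed with weights $\pi_\#\mathcal{D}_T$. The paper tacitly does the same when it says ``weighted by the target density''. With uniform weights, your Step 3 would need an extra discrepancy term.

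Finally, $\pi_\#\mathcal{D}_T$ is \emph{not} the small-$\sigma$ limit of the activation weights. As $\sigma\to 0$, $w_k$ collapses onto the single anchor nearest the averaged fingerprint $\bar{\mathcal{E}}_T$, not onto the distribution of per-graph nearest prototypes.
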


Theorem \ref{thm:generalization} reveals that the target risk is bounded by the source risk and the {structural covering discrepancy}. Theorem \ref{thm:coverage} guarantees that this covering discrepancy is bounded by $\mathcal{O}(\delta + \epsilon)$. This implies that by minimizing the risk on our synthesized universal basis and ensuring dense spectral coverage, we tighten the upper bound on unseen target domains without requiring target labels.

\begin{algorithm}[h]
\caption{Complete Framework of \method{}}
\label{alg:overall}
\begin{algorithmic}[1]
\REQUIRE Source Data $\mathcal{D}_S$, Target Data $\mathcal{D}_T$ (unlabeled), Basis Size $K$, iterations $T$, Anchors $\{\mu_k\}_{k=1}^K$.
\ENSURE Target Prediction $\hat{\mathbf{Y}}_T$.

\STATE \textbf{--- Stage 1: Universal Basis Distillation ---}
\STATE Initialize synthetic basis $\mathcal{S}_{syn} = \{(A_k, X_k, Y_k)\}_{k=1}^K$ randomly.
\WHILE{not converged}
    \STATE \textit{// Inner Loop: Update Proxy Model}
    \STATE Calculate $\theta^*(\mathcal{S}_{syn})$ by minimizing $\mathcal{L}_{\text{CE}}$ on $\mathcal{S}_{syn}$ (Eq.~\eqref{eq:inner_proxy}).
    
    \STATE \textit{// Outer Loop: Update Universal Basis}
    \STATE Compute Semantic Loss $\mathcal{L}_{\text{sem}}$ on real source batch $\mathcal{D}_S$.
    \STATE Compute Spectral Loss $\mathcal{L}_{\text{span}}$ using anchors $\{\mu_k\}$ (Eq.~\eqref{eq:outer_meta}).
    \STATE Compute Diversity Loss $\mathcal{L}_{\text{div}}$ via GW distance.
    \STATE Update $\mathcal{S}_{syn} \leftarrow \mathcal{S}_{syn} - \eta \nabla_{\mathcal{S}_{syn}} (\mathcal{L}_{\text{sem}} + \lambda_1 \mathcal{L}_{\text{span}} + \lambda_2 \mathcal{L}_{\text{div}})$.
\ENDWHILE
\STATE \textbf{Output:} Optimized Universal Basis $\mathcal{S}_{syn}$.

\STATE \textbf{--- Stage 2: Spectral-Aware Adaptive Inference ---}
\STATE \textit{// Step 1: Target Fingerprinting}
\STATE Compute target domain spectral signature $\bar{\mathcal{E}}_T$ with Eq.~\eqref{Fingerprinting}.

\STATE \textit{// Step 2: Basis Activation}
\STATE Compute importance weights $\mathbf{w}$ based on $\|\bar{\mathcal{E}}_T - \mu_k\|$ (Eq.~\eqref{eq:attention_weights}).

\STATE \textit{// Step 3: Proxy Adaptation (Label-free)}
\STATE Initialize proxy classifier $\phi$.
\FOR{iter $= 1$ to $T$}
    \STATE Sample batch $(A_k, X_k, Y_k)$ from $\mathcal{S}_{syn}$.
    \STATE Update $\phi$ by minimizing weighted loss with Eq.~\eqref{eq:weighted_proxy}.
\ENDFOR

\STATE \textit{// Step 4: Prediction}
\RETURN Target Prediction $\hat{\mathbf{Y}}_T = \text{GNN}(\mathbf{A}_T, \mathbf{X}_T; \phi^*)$.
\end{algorithmic}
\end{algorithm}

\input{table/table_combination}

\subsection{Learning Framework}
\label{sec:learning_framework}

The protocol is detailed in Algorithm \ref{alg:overall}. Treating the basis $\mathcal{S}_{syn}$ as learnable tensors, we proceed with a bi-level optimization. In the \textit{inner loop} (Line 5), we obtain the optimal proxy parameters $\theta^*$ exclusively on the synthetic basis to establish the differentiable coupling between topology and decision boundaries. In the \textit{outer loop} (Lines 7-10), we validate the proxy on real source data ($\mathcal{L}_{\text{sem}}$) under spectral constraints ($\mathcal{L}_{\text{span}}, \mathcal{L}_{\text{div}}$), and update $\mathcal{S}_{syn}$ via meta-gradients.
During testing, we perform label-free adaptation. We first profile the target's spectral signature $\bar{\mathcal{E}}_T$ (Line 15) to compute basis activation weights $\mathbf{w}$ (Line 17). Instead of predicting with a fixed model, we train a lightweight target-specific proxy $\phi$ on the \textit{weighted} basis (Lines 20-23). Finally, predictions are generated by feeding the original target graph into this adapted proxy (Line 25), ensuring spectral alignment without modifying the input structure.

%% file: table/table_combination.tex
\begin{table*}[h]
\small
\centering
\caption{Graph classification results (in \%) under node and edge density domain shifts on the Mutagenicity dataset, and 
under correlation shifts (Corr. Shift) and feature shifts (source$\rightarrow$target). S, SB, P, D, C, CM, B, and BM denote Spurious-Motif, Spurious-Motif\_bias, PROTEINS, DD, COX2, COX2\_MD, BZR, and BZR\_MD, respectively. \textbf{Bold} indicates the best performance.}
\vspace{-0.1cm}
\resizebox{1.0\textwidth}{!}{
\begin{tabular}{l|c|c|c|c|c|c|c|c|c|c|c|c|c|c}
\toprule
\textbf{Methods}
& \multicolumn{3}{c|}{\textbf{Node Shift}}
& \multicolumn{3}{c|}{\textbf{Edge Shift}}
& \multicolumn{2}{c|}{\textbf{Corr. Shift}} 
& \multicolumn{6}{c}{\textbf{Feature Shift}} \\
\cmidrule(lr){2-4} \cmidrule(lr){5-7} \cmidrule(lr){8-15}
& M0$\rightarrow$M1 & M0$\rightarrow$M2 & M0$\rightarrow$M3
& M0$\rightarrow$M1 & M0$\rightarrow$M2 & M0$\rightarrow$M3
& S$\rightarrow$SB & SB$\rightarrow$S & P$\rightarrow$D & D$\rightarrow$P & C$\rightarrow$CM & CM$\rightarrow$C & B$\rightarrow$BM & BM$\rightarrow$B \\
\midrule

WL subtree
& 34.3 & 40.4 & 52.7
& 34.4 & 47.6 & 52.7
& 51.3 & 33.7 & 43.0 & 42.2 & 53.1 & 58.2 & 51.3 & 44.0 \\

GCN
& 64.1\scriptsize{$\pm$1.4} & 65.5\scriptsize{$\pm$2.0} & 56.9\scriptsize{$\pm$2.1}
& 66.3\scriptsize{$\pm$1.7} & 63.6\scriptsize{$\pm$1.4} & 56.0\scriptsize{$\pm$1.4}
& 33.8\scriptsize{$\pm$1.4} & 36.9\scriptsize{$\pm$1.0} & 48.9\scriptsize{$\pm$2.0} & 60.9\scriptsize{$\pm$2.3} & 51.0\scriptsize{$\pm$1.8} & 66.9\scriptsize{$\pm$1.8} & 48.7\scriptsize{$\pm$2.0} & 78.8\scriptsize{$\pm$1.7} \\

GIN
& 66.5\scriptsize{$\pm$2.1} & 52.0\scriptsize{$\pm$1.7} & 53.7\scriptsize{$\pm$1.7}
& 67.1\scriptsize{$\pm$1.7} & 54.2\scriptsize{$\pm$2.6} & 55.4\scriptsize{$\pm$1.9}
& 45.0\scriptsize{$\pm$2.4} & 48.2\scriptsize{$\pm$1.9} & 57.3\scriptsize{$\pm$2.2} & 61.9\scriptsize{$\pm$1.9} & 53.8\scriptsize{$\pm$2.5} & 55.6\scriptsize{$\pm$2.0} & 49.9\scriptsize{$\pm$2.4} & 79.2\scriptsize{$\pm$2.8} \\

GMT
& 65.7\scriptsize{$\pm$1.8} & 62.1\scriptsize{$\pm$2.1} & 59.0\scriptsize{$\pm$2.0}
& 67.9\scriptsize{$\pm$1.3} & 61.5\scriptsize{$\pm$1.8} & 58.2\scriptsize{$\pm$2.4}
& 35.7\scriptsize{$\pm$2.0} & 43.1\scriptsize{$\pm$2.0} & 59.5\scriptsize{$\pm$2.5} & 50.7\scriptsize{$\pm$2.2} & 49.3\scriptsize{$\pm$1.8} & 58.2\scriptsize{$\pm$2.0} & 50.2\scriptsize{$\pm$2.3} & 74.4\scriptsize{$\pm$1.8} \\

CIN
& 65.1\scriptsize{$\pm$1.7} & 66.0\scriptsize{$\pm$1.7} & 55.2\scriptsize{$\pm$1.5}
& 66.3\scriptsize{$\pm$1.8} & 60.8\scriptsize{$\pm$1.7} & 55.8\scriptsize{$\pm$2.4}
& 43.8\scriptsize{$\pm$2.2} & 47.4\scriptsize{$\pm$1.8} & 59.1\scriptsize{$\pm$2.6} & 58.0\scriptsize{$\pm$2.7} & 51.2\scriptsize{$\pm$2.0} & 55.6\scriptsize{$\pm$1.5} & 49.2\scriptsize{$\pm$1.4} & 74.2\scriptsize{$\pm$1.9} \\

PathNN
& 70.2\scriptsize{$\pm$1.5} & 67.1\scriptsize{$\pm$2.0} & 58.0\scriptsize{$\pm$1.9}
& 68.9\scriptsize{$\pm$1.9} & 62.9\scriptsize{$\pm$1.7} & 58.1\scriptsize{$\pm$1.6}
& 46.2\scriptsize{$\pm$4.2} & 49.8\scriptsize{$\pm$5.8} & 57.9\scriptsize{$\pm$1.8} & 53.8\scriptsize{$\pm$3.3} & 49.8\scriptsize{$\pm$1.7} & 66.9\scriptsize{$\pm$2.5} & 50.3\scriptsize{$\pm$2.3} & 75.3\scriptsize{$\pm$2.2} \\

\midrule
JacobiConv
& 73.4\scriptsize{$\pm$1.1} & 69.4\scriptsize{$\pm$2.1} & 56.3\scriptsize{$\pm$2.2}
& 70.9\scriptsize{$\pm$1.2} & 68.2\scriptsize{$\pm$1.2} & 55.3\scriptsize{$\pm$1.4}
& 41.7\scriptsize{$\pm$2.9} & 37.0\scriptsize{$\pm$1.7} & 59.3\scriptsize{$\pm$1.2} & 59.7\scriptsize{$\pm$2.1} & 52.9\scriptsize{$\pm$1.9} & 72.9\scriptsize{$\pm$2.5} & 54.3\scriptsize{$\pm$1.4} & 76.2\scriptsize{$\pm$2.5} \\

AutoSGNN
& 44.6\scriptsize{$\pm$1.2} & 44.2\scriptsize{$\pm$1.8} & 53.3\scriptsize{$\pm$2.0}
& 69.9\scriptsize{$\pm$2.3} & 62.4\scriptsize{$\pm$2.7} & 52.3\scriptsize{$\pm$1.2}
& 44.3\scriptsize{$\pm$2.3} & 38.9\scriptsize{$\pm$1.9} & 58.7\scriptsize{$\pm$1.3} & 59.6\scriptsize{$\pm$2.0} & 52.3\scriptsize{$\pm$1.6} & 72.1\scriptsize{$\pm$2.6} & 49.7\scriptsize{$\pm$1.3} & 74.2\scriptsize{$\pm$2.2} \\

\midrule
SFDA\_LLN
& 70.7\scriptsize{$\pm$1.6} & 68.1\scriptsize{$\pm$1.2} & 60.4\scriptsize{$\pm$1.3}
& 67.7\scriptsize{$\pm$1.5} & 66.5\scriptsize{$\pm$2.6} & 58.8\scriptsize{$\pm$2.1}
& 34.7\scriptsize{$\pm$1.1} & 41.5\scriptsize{$\pm$2.1} & 59.2\scriptsize{$\pm$1.1} & 59.9\scriptsize{$\pm$1.5} & 56.9\scriptsize{$\pm$1.9} & 74.2\scriptsize{$\pm$2.0} & 53.5\scriptsize{$\pm$2.2} & 77.3\scriptsize{$\pm$1.8} \\

SF(DA)$^2$
& 72.0\scriptsize{$\pm$1.5} & 69.7\scriptsize{$\pm$1.9} & 62.0\scriptsize{$\pm$1.8}
& 71.7\scriptsize{$\pm$1.9} & 68.8\scriptsize{$\pm$2.3} & 61.9\scriptsize{$\pm$2.0}
& 40.5\scriptsize{$\pm$1.8} & 50.1\scriptsize{$\pm$2.4} & 62.2\scriptsize{$\pm$1.7} & 63.2\scriptsize{$\pm$1.8} & 51.7\scriptsize{$\pm$2.3} & 74.6\scriptsize{$\pm$2.6} & 48.9\scriptsize{$\pm$1.4} & 78.8\scriptsize{$\pm$2.0} \\

NVC\_LLN
& 71.4\scriptsize{$\pm$1.3} & 67.8\scriptsize{$\pm$1.1} & 62.1\scriptsize{$\pm$1.5}
& 70.4\scriptsize{$\pm$2.2} & 67.0\scriptsize{$\pm$2.2} & 62.2\scriptsize{$\pm$1.7}
& 47.2\scriptsize{$\pm$1.6} & 53.0\scriptsize{$\pm$1.4} & 59.2\scriptsize{$\pm$2.1} & 60.0\scriptsize{$\pm$1.8} & 56.6\scriptsize{$\pm$3.1} & 76.7\scriptsize{$\pm$2.0} & 53.8\scriptsize{$\pm$1.8} & 78.2\scriptsize{$\pm$2.0} \\

\midrule
SOGA
& 73.5\scriptsize{$\pm$1.3} & 69.4\scriptsize{$\pm$2.5} & 63.1\scriptsize{$\pm$2.1}
& 72.3\scriptsize{$\pm$2.2} & 69.8\scriptsize{$\pm$1.9} & 63.1\scriptsize{$\pm$1.8}
& 47.4\scriptsize{$\pm$2.8} & 41.5\scriptsize{$\pm$2.2} & 64.4\scriptsize{$\pm$1.3} & 65.4\scriptsize{$\pm$1.8} & 51.5\scriptsize{$\pm$2.0} & 76.9\scriptsize{$\pm$2.5} & 53.3\scriptsize{$\pm$3.2} & 78.9\scriptsize{$\pm$2.3} \\

GraphCTA
& 73.7\scriptsize{$\pm$1.8} & 70.7\scriptsize{$\pm$1.5} & \textbf{65.2\scriptsize{$\pm$1.7}}
& 72.0\scriptsize{$\pm$2.2} & 70.5\scriptsize{$\pm$1.6} & 62.8\scriptsize{$\pm$1.8}
& 50.9\scriptsize{$\pm$2.1} & 45.2\scriptsize{$\pm$1.7} & 60.6\scriptsize{$\pm$1.9} & 61.7\scriptsize{$\pm$1.8} & 56.8\scriptsize{$\pm$3.5} & 77.3\scriptsize{$\pm$2.0} & 54.2\scriptsize{$\pm$2.7} & 79.0\scriptsize{$\pm$2.6} \\

GALA
& 75.1\scriptsize{$\pm$2.1} & 70.8\scriptsize{$\pm$1.6} & 64.0\scriptsize{$\pm$2.1}
& 72.2\scriptsize{$\pm$1.2} & 70.3\scriptsize{$\pm$1.7} & 63.4\scriptsize{$\pm$1.8}
& 51.2\scriptsize{$\pm$4.0} & 47.5\scriptsize{$\pm$3.1} & 65.6\scriptsize{$\pm$1.3} & 66.0\scriptsize{$\pm$2.5} & 61.8\scriptsize{$\pm$2.7} & 77.0\scriptsize{$\pm$2.6} & 57.4\scriptsize{$\pm$2.3} & 78.8\scriptsize{$\pm$1.8} \\

\midrule
\method{}
& \textbf{75.1\scriptsize{$\pm$1.8}} & \textbf{72.0\scriptsize{$\pm$2.1}} & 65.1\scriptsize{$\pm$1.4}
& \textbf{73.2\scriptsize{$\pm$1.7}} & \textbf{71.9\scriptsize{$\pm$1.5}} & \textbf{64.2\scriptsize{$\pm$1.6}}
& \textbf{53.2\scriptsize{$\pm$2.3}} & \textbf{54.7\scriptsize{$\pm$2.7}} & \textbf{66.2\scriptsize{$\pm$1.4}} & \textbf{66.7\scriptsize{$\pm$1.9}} & \textbf{62.9\scriptsize{$\pm$2.1}} & \textbf{77.8\scriptsize{$\pm$2.4}} & \textbf{58.6\scriptsize{$\pm$1.6}} & \textbf{79.6\scriptsize{$\pm$2.1}} \\

\bottomrule
\end{tabular}}
\label{tab:combined_shifts}
\end{table*}

%% file: code/5_experiments.tex
\section{Experiments}

\subsection{Experimental Settings}

\textbf{Datasets.} We evaluate the proposed \method{} on comprehensive benchmarks under three types of domain shifts: (1) Structure-based domain shifts: Structural shifts are simulated on DD~\citep{dobson2003distinguishing}, Mutagenicity~\citep{kazius2005derivation}, NCI1~\citep{wale2008comparison}, FRANKENSTEIN~\citep{orsini2015graph}, and ogbg-molhiv~\citep{hu2021ogblsc} by partitioning graphs according to node and edge densities~\cite{yin2023coco, yin2025dream}. (2) Feature-based domain shifts: \method{} is evaluated on DD, PROTEINS, BZR, BZR\_MD, COX2, and COX2\_MD, where the source and target domains share identical graph structures but exhibit significant discrepancies in feature distributions. (3) Correlation shifts: We further evaluate \method{} on a synthetic Spurious-Motif dataset~\cite{wu2022discovering}, where the predictive relationships between class labels and spurious motif patterns differ between the source and target domains. More details of these datasets are shown in Appendix~\ref{sec:dataset}.

\noindent \textbf{Baselines.} We compare the proposed \method{} with a comprehensive set of competitive baselines on the datasets above. These baselines include: graph kernel methods: WL~\citep{shervashidze2011weisfeiler} and PathNN~\citep{michel2023path}; graph neural networks: GCN~\citep{kipf2017semi}, GIN~\citep{xu2018powerful}, CIN~\citep{bodnar2021weisfeiler}, and GMT~\citep{baek2021accurate}; spectral-based graph neural networks: JacobiConv~\citep{wang2022powerful} and AutoSGNN~\citep{mo2025autosgnn}; source free domain adaptation methods: SFDA\_LLN ~\citep{yi2023source}, SF(DA)$^2$~\citep{hwang2024sf} and NVC-LLN~\citep{xu2025unraveling}; source free graph domain adaptation methods: SOGA~\citep{mao2024source}, GraphCTA~\citep{zhang2024collaborate} and GALA~\citep{luo2024gala}. More details about baselines are introduced in Appendix ~\ref{sec:baselines}.

\noindent \textbf{Implementation Details.} We implement \method{} using PyTorch and conduct all experiments on NVIDIA A100 GPUs. GIN~\citep{xu2018powerful} is adopted as the backbone encoder. \method{} is trained under identical settings with the Adam optimizer, three GNN layers, and a hidden dimension of 128. For the bi-level optimization in \method{}, the inner-loop GNN encoder is updated for 20 steps with a learning rate of $1\times10^{-3}$, identical to that of the outer-loop synthetic basis generator. \method{}-specific hyperparameters are set to $T=20$, $K=20$, $E_{\max}=1.2$, and $\sigma=1.0$ for spectral anchors and inference. The loss balancing coefficients are fixed to $\lambda_1=0.9$ and $\lambda_2=0.5$. We report classification accuracy on TUDataset benchmarks (e.g., DD) and Spurious-Motif, and AUC scores on OGB datasets (e.g., ogbg-molhiv), with results averaged over five runs.

\subsection{Performance Comparison}

We present the results of the proposed \method{} with all baselines under three types of domain shifts on different datasets in Tables \ref{tab:combined_shifts} and \ref{tab:dd_idx}-\ref{tab:hiv_node}. From these tables, we observe that: 
(1) SFDA methods consistently outperform GNN baselines, demonstrating that active domain alignment is superior to simple model transfer. By leveraging unlabeled target data through self-training and consistency regularization, these approaches effectively enhance generalization and mitigate the predictive bias often suffered by source-only models under significant distribution shifts. (2) SF-GDA methods consistently surpass general SFDA approaches, underscoring the necessity of incorporating graph-specific properties during the adaptation process. By explicitly accounting for graph topology and structural dependencies, these methods better preserve inherent semantics and achieve more stable adaptation, especially when facing drastic and unforeseen structural shifts.
(3) The proposed \method{} achieves the best performance in most cases, showing a clear advantage over existing methods. This can be attributed to two main reasons: First, by shifting from model fine-tuning to a data-centric basis distillation, \method{} effectively decouples adaptation from source-specific biases and suppresses the propagation of predictive errors within the target domain. Second, the spectral extrapolation mechanism enables \method{} to cover the full range of graph energy levels, ensuring robust adaptation to diverse topological patterns not seen during source training. More results on other datasets can be found in Appendix~\ref{sec:model performance}.

\begin{figure}[H]
    \begin{subfigure}{0.48\linewidth}
        \centering
        \includegraphics[width=\linewidth]{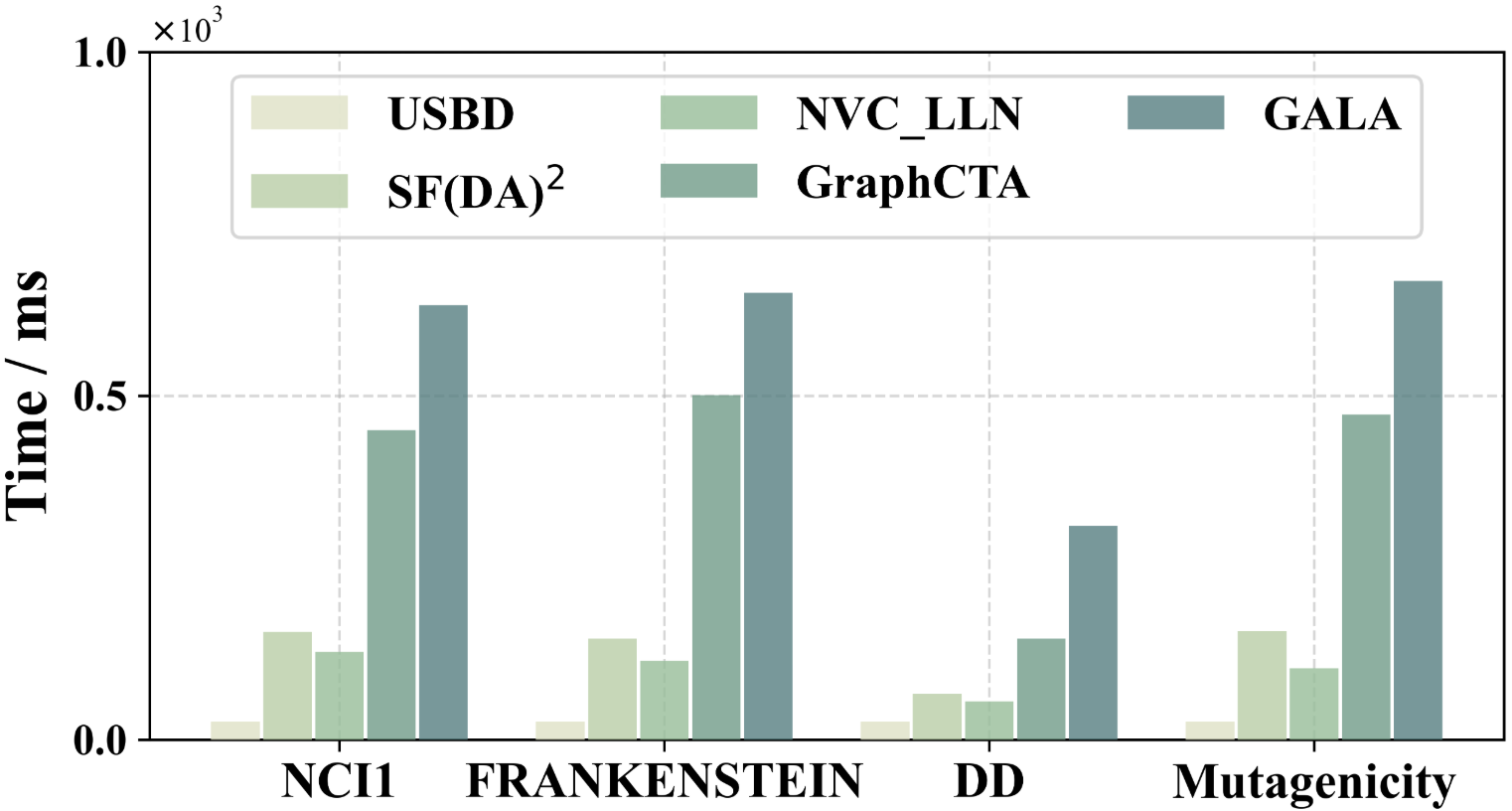}
        \caption{Time consumption}
        \label{fig:adapt_time_ms}
    \end{subfigure}
    \hspace{0.1cm}
    \hfill
    \begin{subfigure}{0.48\linewidth}
        \centering
        \includegraphics[width=\linewidth]{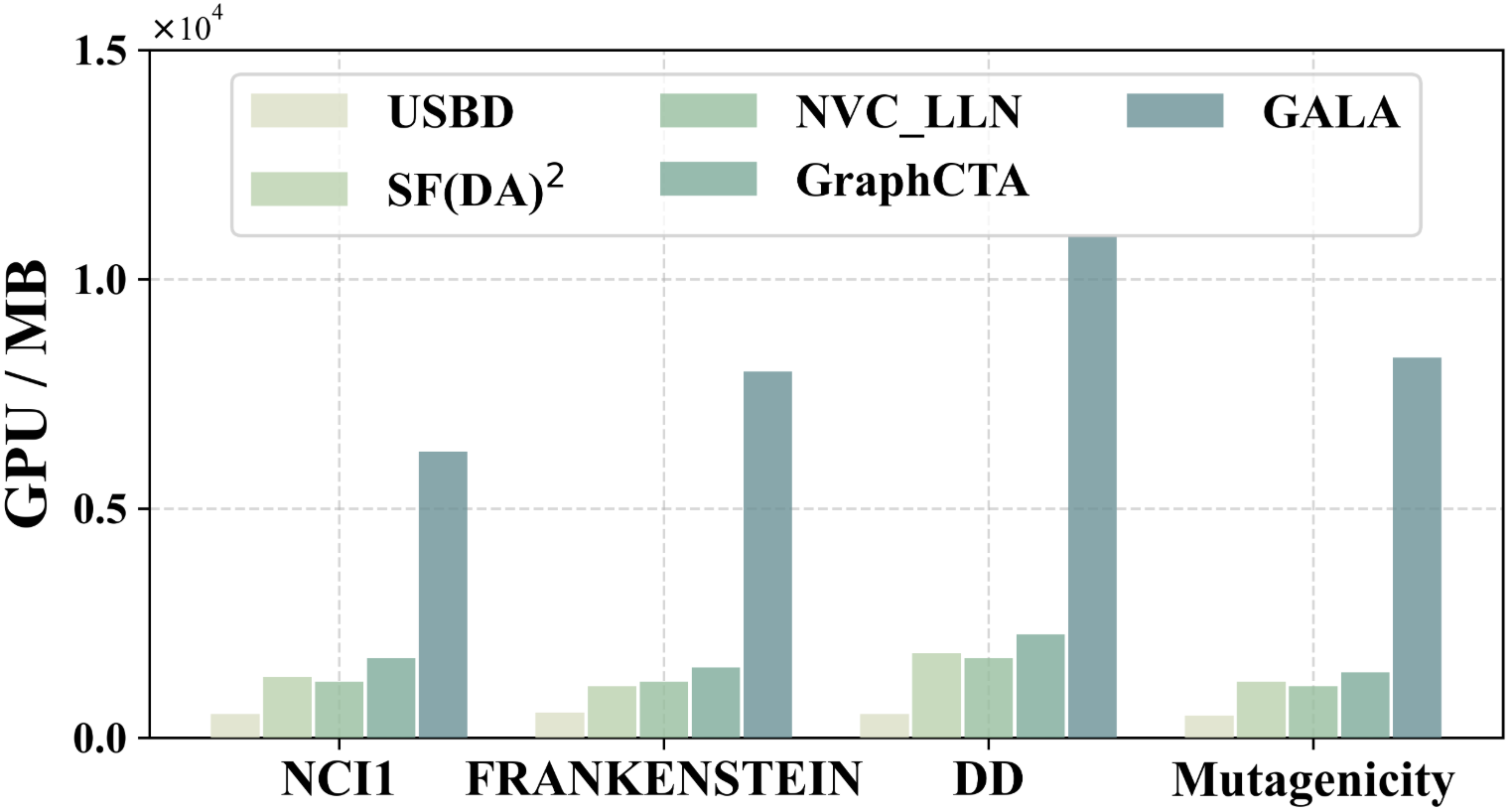}
        \caption{GPU consumption}
        \label{fig:adapt_gpu_mb}
    \end{subfigure}
    \vspace{-0.1cm}
    \caption{Comparison of time and GPU consumption between baseline methods and \method{} during the adaptation stage.}
    \label{fig:adapt_time_gpu}
\end{figure}

\subsection{Efficiency and Generalization Analysis}\label{sec:efficiency}

In this part, we first analyze the efficiency of \method{} during the adaptation stage by comparing it with SF(DA)$^2$, NVC\_LLN, GraphCTA, and GALA in terms of per-epoch training time (ms) and GPU memory consumption (MB). The results are reported in Figure.~\ref{fig:adapt_time_gpu}, where \method{} consistently achieves substantially lower time and memory consumption across all datasets. This superior efficiency stems from our core design: \method{} performs adaptation on a compact, pre-synthesized basis ($|S_{syn}| \ll |D_T|$) rather than directly optimizing on the whole target graphs. This effectively decouples the computational overhead from the target data size, successfully resolving the efficiency bottleneck inherent in most traditional methods. Additionally, to evaluate generalization capability, we conduct a cross-domain study on the Mutagenicity dataset using a challenging "adapt-once, deploy-anywhere" setting. Specifically, models adapted for the M0 $\rightarrow$ M1 task are directly evaluated on a completely unseen target domain, M2, without any further tuning. Figure.~\ref{fig:generalization_tsne_mutag} presents the t-SNE visualizations, where \method{} exhibits significantly more compact intra-class clusters and clearer inter-class separation than existing baselines. This result validates that \method{} successfully learns a universal structural basis capable of handling arbitrary topological shifts, thereby achieving robust and highly scalable generalization across diverse graph environments. More t-SNE visualizations can be found in Appendix~\ref{sec:vis}.

\begin{figure}[t]
    \centering
    \begin{subfigure}[t]{0.23\textwidth}
        \centering
        \includegraphics[width=\linewidth]{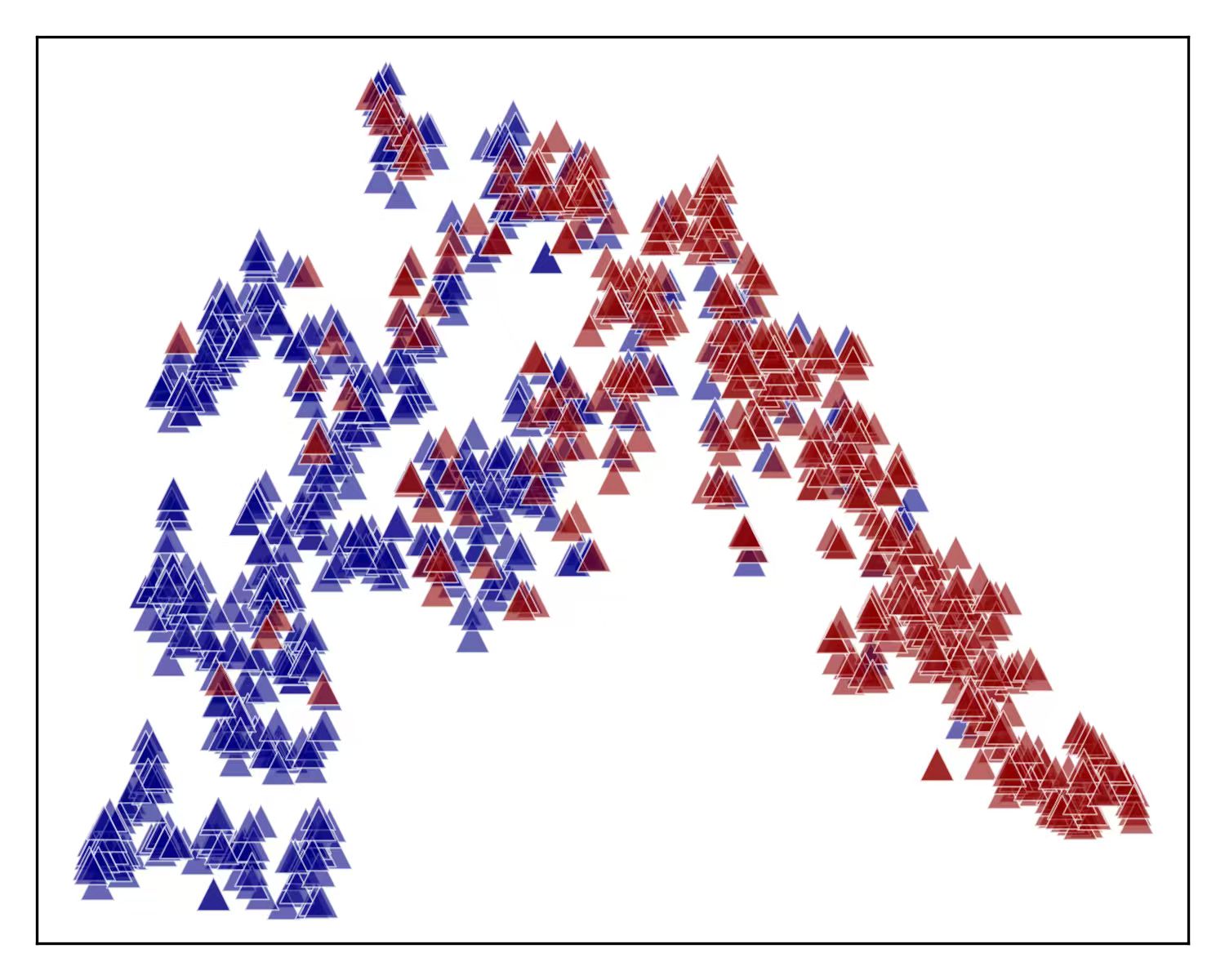}
        \caption{USBD}
        \label{fig:tsne_usbd}
    \end{subfigure}\hfill
    \begin{subfigure}[t]{0.23\textwidth}
        \centering
        \includegraphics[width=\linewidth]{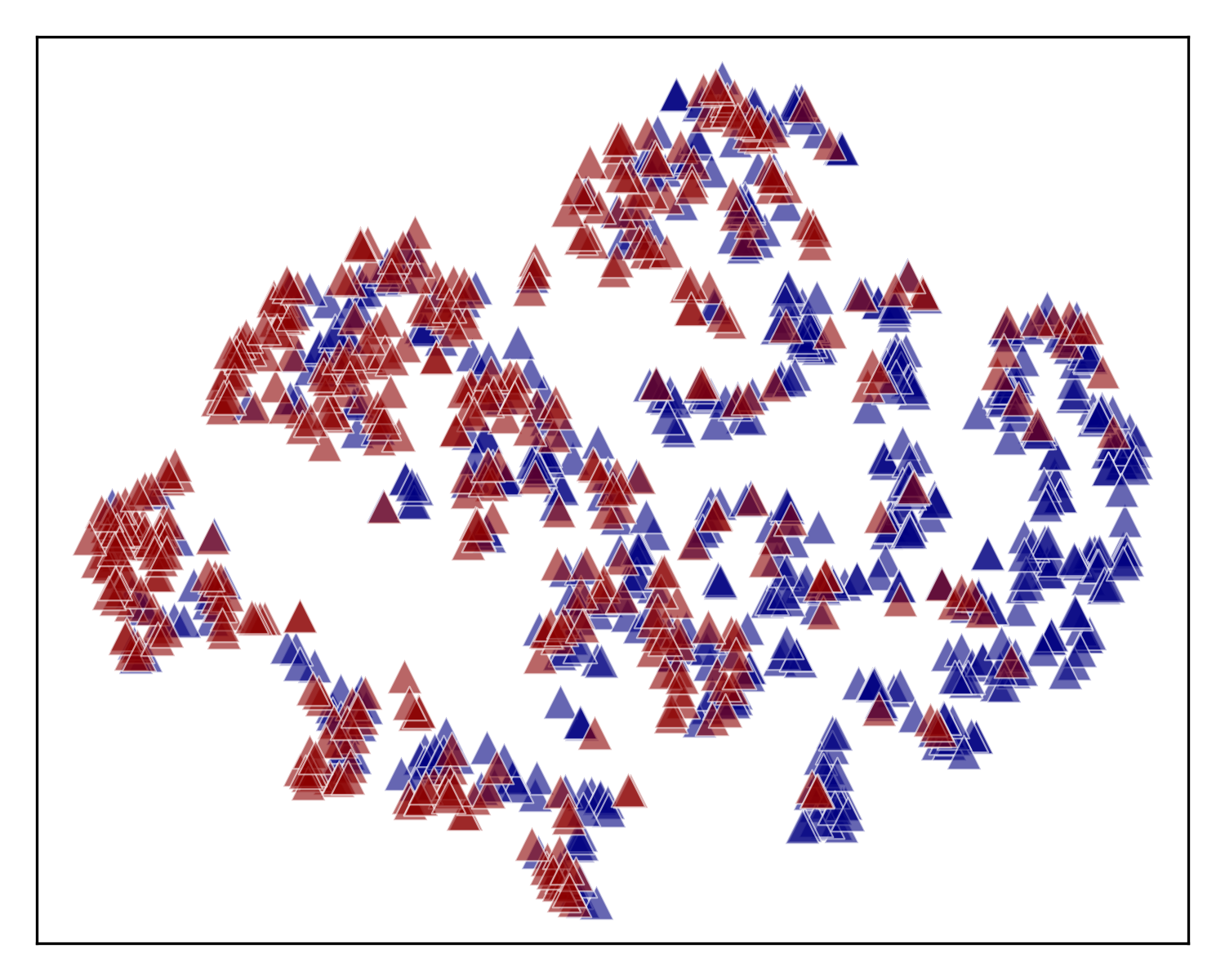}
        \caption{SF(DA)$^2$}
        \label{fig:tsne_sfda2}
    \end{subfigure}
    \vspace{-0.1cm}
    \caption{T-SNE visualizations on M2 of the Mutagenicity dataset for \method{} and baselines trained/adapted on M0$\rightarrow$M1.}
    \vspace{-0.1cm}
    \label{fig:generalization_tsne_mutag}
\end{figure}

\begin{table}[h]
\small
\centering
\caption{The results of ablation studies on the Mutagenicity dataset. \textbf{Bold} results indicate the best performance.}
\vspace{-0.2cm}
\resizebox{0.48\textwidth}{!}{
\begin{tabular}{l|c|c|c|c|c|c}
\toprule
\textbf{Methods}
& M0$\rightarrow$M1 & M1$\rightarrow$M0
& M0$\rightarrow$M2 & M2$\rightarrow$M0
& M0$\rightarrow$M3 & M3$\rightarrow$M0 \\
\midrule 
\method{} w/o SE  & 58.7 & 59.3 & 57.2 & 60.7 & 54.4 & 56.8 \\
\method{} w/o SP  & 69.5 & 69.8 & 67.4 & 69.1 & 59.1 & 68.2 \\
\method{} w/o DI  & 71.9 & 72.5 & 70.8 & 72.4 & 63.5 & 71.5 \\
\method{} w/o AD  & 65.6 & 66.5 & 64.1 & 65.9 & 57.8 & 64.7 \\ 
\midrule
\method{} & \textbf{73.2} & \textbf{73.9} & \textbf{71.9} & \textbf{73.7} & \textbf{64.2} & \textbf{72.8} \\ 
\bottomrule
\end{tabular}
}
\label{tab:ablation_mutag_part}
\vspace{-0.2cm}
\end{table}

\subsection{Ablation Study}~\label{sec:ablation}

To systematically examine the contribution of each key component in \method{}, we conduct ablation studies on four model variants: (1) \method{} w/o SE, which removes the semantic loss $\mathcal{L}_{\text{sem}}$ and discards the bi-level meta-matching constraint aligning the synthesized structural basis with the source decision boundaries; (2) \method{} w/o SP, which excludes the spectral loss $\mathcal{L}_\text{span}$, allowing the generator to synthesize graphs without enforcing coverage of the full Dirichlet energy spectrum; (3) \method{} w/o DI, which removes the diversity loss $\mathcal{L}_{\text{div}}$, thereby eliminating the constraint enforcing pairwise structural orthogonality among the synthesized basis graphs; and (4) \method{} w/o AD, which disables the spectral-aware adaptive weighting mechanism during inference and assigns uniform weights to all synthesized basis graphs irrespective of the spectral fingerprint of target graphs. 

Experimental results are shown in Table~\ref{tab:ablation_mutag_part}. From the table, we observe that: (1) The semantic loss $\mathcal{L}_{\text{sem}}$ is introduced to preserve source-discriminative semantics during basis synthesis via bi-level meta-matching. When this component is removed (\method{} w/o SE), performance degrades consistently across all cases, indicating that without $\mathcal{L}_{\text{sem}}$, the synthesized structural basis no longer remains aligned with the source decision boundaries, which substantially impairs transferability; (2) The spectral loss $\mathcal{L}_{\text{span}}$ and the diversity loss $\mathcal{L}_{\text{div}}$ jointly govern the coverage and distinctiveness of the synthesized basis. Specifically, $\mathcal{L}_{\text{span}}$ enforces coverage over the full Dirichlet energy spectrum, while $\mathcal{L}_{\text{div}}$ discourages structural redundancy among basis graphs. Removing either component (\method{} w/o SP or \method{} w/o DI) leads to consistent performance degradation, indicating that both spectrum-wide extrapolation and structural distinctiveness are essential for robust adaptation under severe structural shifts; (3) The spectral-aware adaptive inference module is designed to align inference with the target domain by activating basis prototypes according to its spectral fingerprint, thereby minimizing the structural covering discrepancy. When this mechanism is disabled (\method{} w/o AD), all basis graphs are weighted uniformly, causing the inference process to ignore spectral mismatches between the target domain and individual prototypes. Consequently, the adapted classifier is optimized on spectrally misaligned bases, leading to consistent performance degradation. More results on other datasets can be found in Appendix~\ref{sec:ablation study}.

\subsection{Sensitivity Analysis}\label{sec:sensitivity}

\begin{figure}[t]
    \begin{subfigure}{0.48\linewidth}
        \centering
    \raisebox{0.3cm}{\includegraphics[width=\linewidth]{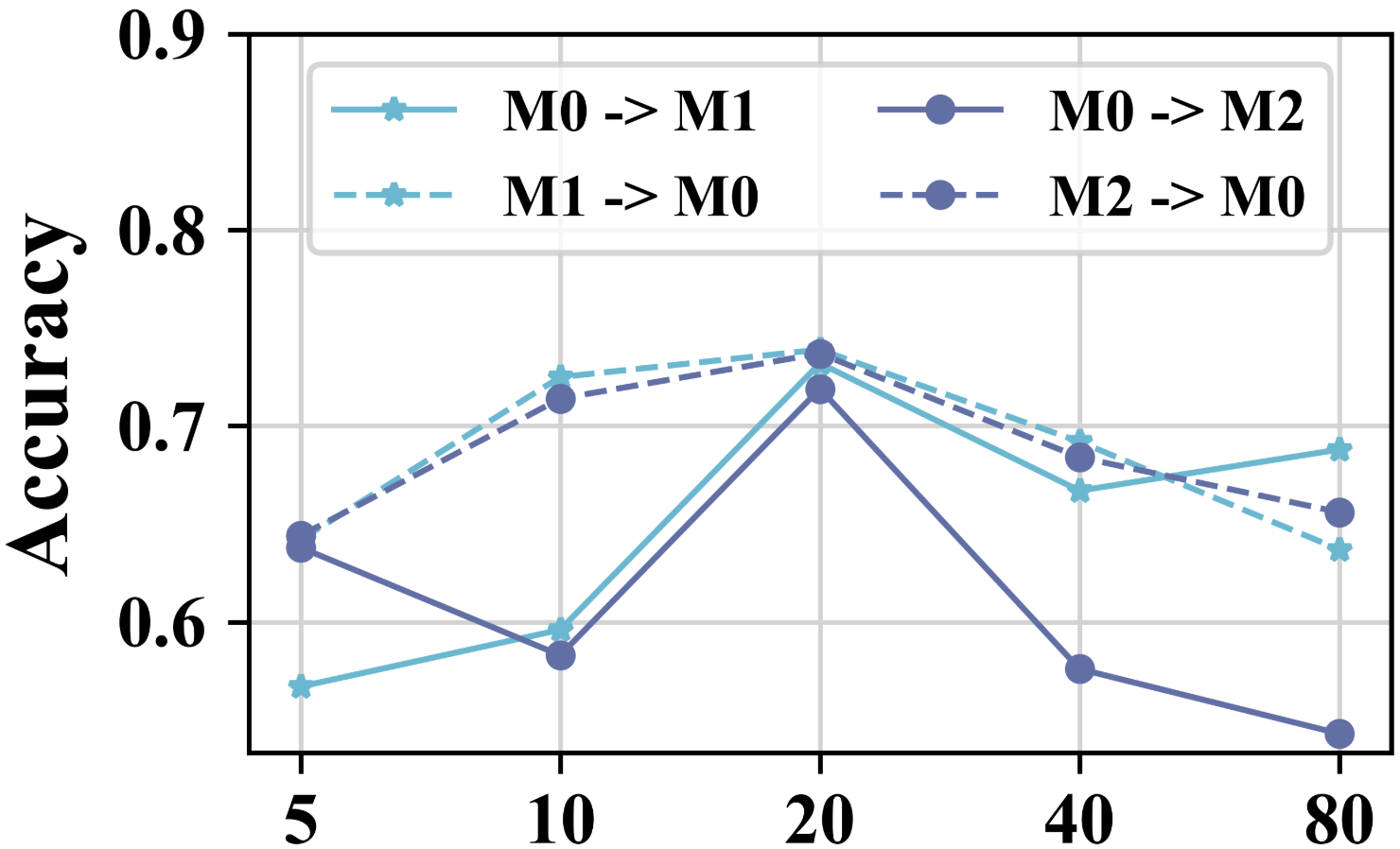}}
        \caption{Synthetic Bases $K$}
        \label{fig:mutag_K}
    \end{subfigure}
    \hspace{0.1cm}
    \hfill
    \begin{subfigure}[b]{0.48\linewidth}\centering\includegraphics[width=\linewidth]{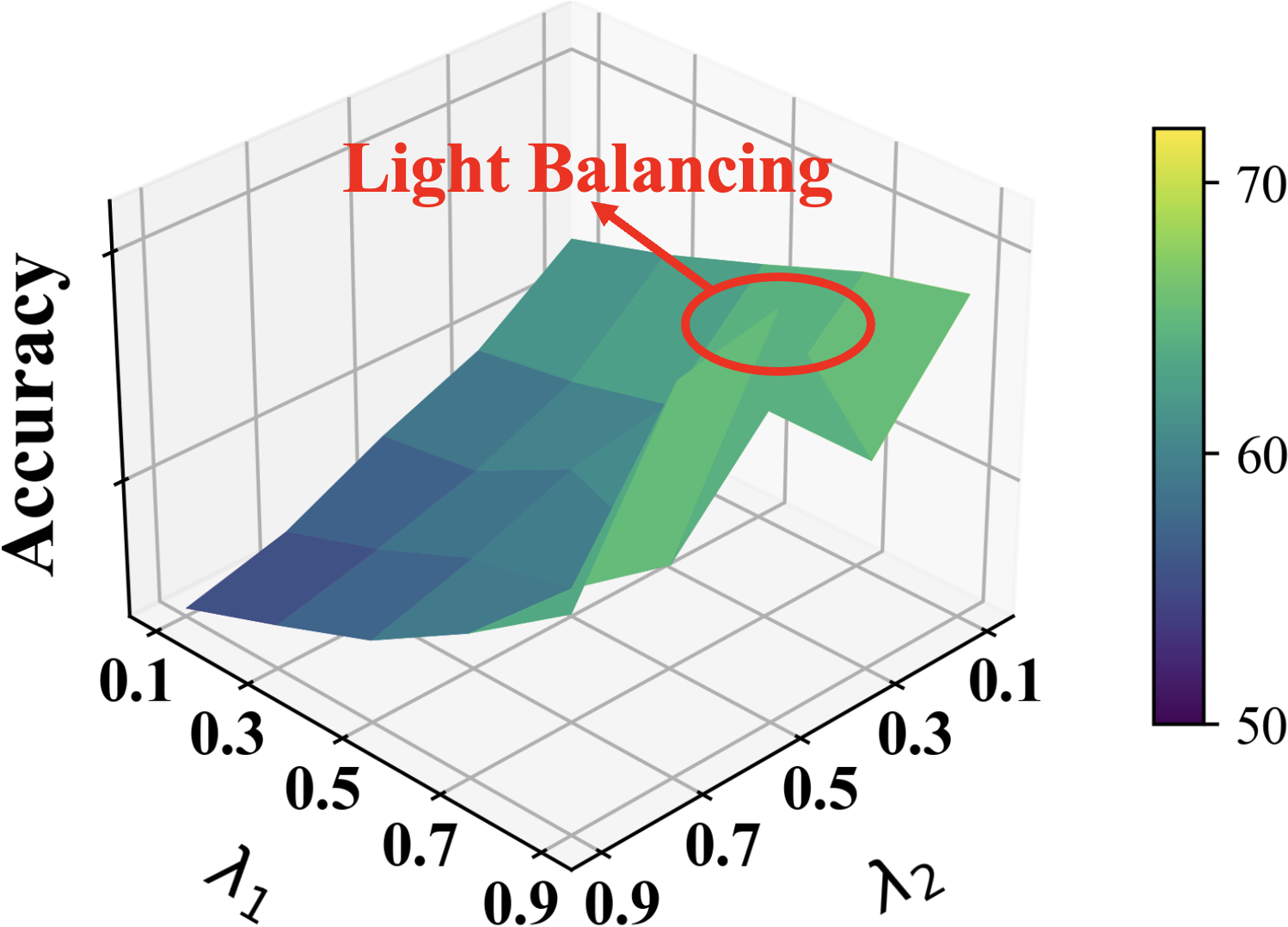}
        \caption{Coefficient ($\lambda_1$, $\lambda_2$)}
        \label{fig:lambda1_lambda2}
    \end{subfigure}
    \vspace{-0.1cm}
    \caption{Sensitivity analysis of the number of synthetic bases $K$ and balance coefficient ($\lambda_1$, $\lambda_2$) on the Mutagenicity dataset.}
    \label{fig:sensitivity}
\end{figure}

We conduct a sensitivity analysis on the number of synthetic bases $K$ and the balance coefficients $(\lambda_1, \lambda_2)$, as shown in Figure.~\ref{fig:sensitivity}. Here, $K$ controls the size of the universal structural basis, while $\lambda_1$ and $\lambda_2$ balance the spectral loss and the diversity loss, respectively, governing spectrum-wide coverage and topological distinctiveness during basis synthesis.

Figure.~\ref{fig:sensitivity} illustrates how these hyperparameters affect the performance of \method{} on the Mutagenicity dataset. We vary the number of synthetic bases $K$ within $\{5, 10, 20, 40, 80\}$, and vary both $\lambda_1$ and $\lambda_2$ within $\{0.1, 0.3, 0.5, 0.7, 0.9\}$. We can find that: (1) As shown in Figure.~\ref{fig:sensitivity}(a), performance improves as $K$ increases from 5 to 20, indicating that expanding the basis enhances structural coverage and facilitates more effective adaptation. When $K$ is further increased, performance exhibits diminishing returns in classification accuracy, suggesting that excessively large bases introduce redundancy and reduce the effectiveness of basis activation. Based on the trade-off between coverage and efficiency, we set $K=20$ as the default standard for all experiments; (2) Figure.~\ref{fig:sensitivity}(b) shows that \method{} achieves optimal performance at $\lambda_1 = 0.9$ and $\lambda_2 = 0.5$. Performance degrades when either coefficient is set too low or too high, indicating that effective adaptation requires a balanced trade-off between spectral coverage, structural diversity, and semantic optimization. More results on other datasets are presented in Appendix~\ref{sec:sensitive analysis}.

\begin{figure}[t]
    \centering
    \begin{subfigure}[t]{0.14\textwidth}
        \centering
        \includegraphics[width=\linewidth]{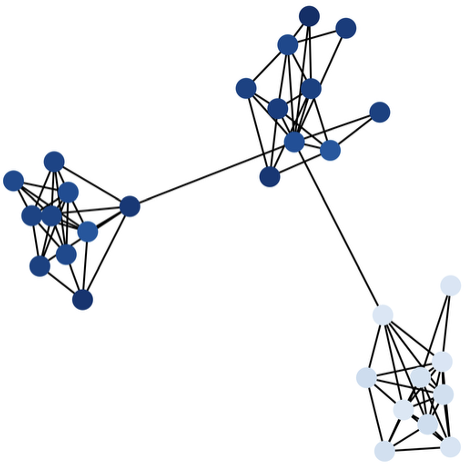}
        \caption{$\mathcal{E}=0.13$}
        \label{fig:graph_1}
    \end{subfigure}
    \hfill
    \begin{subfigure}[t]{0.14\textwidth}
        \centering
        \includegraphics[width=\linewidth]{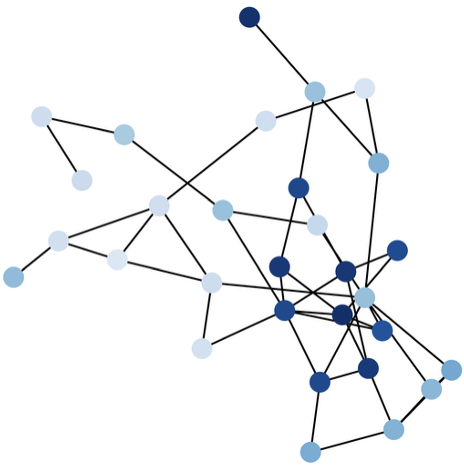}
        \caption{$\mathcal{E}=0.45$}
        \label{fig:graph_3}
    \end{subfigure}
    \hfill
    \begin{subfigure}[t]{0.14\textwidth}
        \centering
        \includegraphics[width=\linewidth]{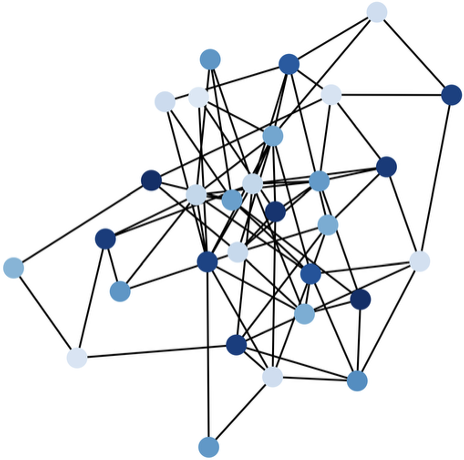}
        \caption{$\mathcal{E}=1.15$}
        \label{fig:graph_5}
    \end{subfigure}
    \vspace{-0.1cm}
    \caption{Visualizations of the universal structural basis with different Dirichlet energy $\mathcal{E}$ on the Mutagenicity dataset.}
    \label{fig:synthetic_basis_energy}
    \vspace{-0.2cm}
\end{figure}

\subsection{Qualitative Analysis of Universal Basis}\label{sec:vis_graph}

In this part, we provide a qualitative assessment of the universal structural basis by visualizing the synthesized prototypes across diverse spectral regimes to evaluate their structural diversity. As illustrated in Figure.~\ref{fig:synthetic_basis_energy}(a), basis graphs characterized by low Dirichlet energy ($\mathcal{E}=0.13$) demonstrate highly clustered and smooth structural formations, which reflect the dominance of low-frequency components typically found in homophilic graph signals. Conversely, the intermediate-energy prototypes ($\mathcal{E}=0.45$) shown in Figure.~\ref{fig:synthetic_basis_energy}(b) exhibit more intricate, mixed structural patterns. These graphs successfully capture transitional topological regimes and facilitate the modeling of more complex node interactions. Furthermore, Figure.~\ref{fig:synthetic_basis_energy}(c) presents high-energy basis graphs ($\mathcal{E}=1.15$), which are defined by irregular connectivity and sparse localized structures, directly corresponding to graph signals dominated by high-frequency components. The distinct progression across the spectral landscape, from Figure.~\ref{fig:synthetic_basis_energy}(a) to Figure.~\ref{fig:synthetic_basis_energy}(c), substantiates that \method{} effectively extrapolates beyond the restricted spectral support of the original source domain. By synthesizing these diverse prototypes, the proposed \method{} constructs a comprehensive structural basis that spans the entire Dirichlet energy spectrum, thereby ensuring robustness against disparate topological shifts. Additional qualitative results and a broader set of synthesized basis graphs have been provided in Appendix~\ref{sec:vis}.

%% file: code/6_conclusion.tex
\section{Conclusion}

In this work, we investigate Source-Free Graph Domain Adaptation (SF-GDA) under severe distributional shifts. We show that existing approaches are fundamentally limited by the inductive and spectral biases embedded in source-pretrained models, which hinder generalization to structurally disparate target graphs. To overcome this limitation, we propose Universal Structural Basis Distillation (\method{}), a data-centric framework that reframes adaptation from fine-tuning biased models to learning a universal structural basis. By distilling source knowledge into a compact set of spectrum-spanning prototypes and coupling it with spectral-aware adaptive inference, \method{} explicitly broadens structural coverage while decoupling adaptation cost from the scale of target data. Extensive experiments across structure, feature, and correlation shift settings demonstrate that \method{} consistently outperforms prior methods across all benchmarks, while maintaining superior computational efficiency. We believe that this universal basis perspective offers a principled direction for robust and privacy-preserving graph adaptation, and lays the groundwork for future exploration of universal representations beyond source-free settings.

%% file: code/7_appendix.tex
\appendix

\section{Proof of Theorem \ref{thm:coverage}}
\label{app:proof_theorem}

\textit{Theorem \ref{thm:coverage} (Universal Spectral Covering Property)}
\textit{Let $\mathcal{H}_{\text{spec}} = [0, E_{\max}]$ be the bounded spectral domain of valid Dirichlet energies. Assume the synthesized basis $\mathcal{S}_{syn} = \{G_k\}_{k=1}^K$ is optimized such that the spectral coverage loss $\mathcal{L}_{\text{span}}$ converges to zero. Let the anchors $\{\mu_k\}_{k=1}^K$ be uniformly distributed over $\mathcal{H}_{\text{spec}}$ with a grid spacing of $\delta = \frac{E_{\max}}{K-1}$. For any unseen target graph $G_T$ with Dirichlet energy $\mathcal{E}(G_T) \in \mathcal{H}_{\text{spec}}$, there exists a basis prototype $G_{k^*} \in \mathcal{S}_{syn}$ such that the spectral approximation error is bounded by:}
\begin{equation}\min_{G_k \in \mathcal{S}_{syn}} | \mathcal{E}(G_T) - \mathcal{E}(G_k) | \le \frac{\delta}{2} + \epsilon. \nonumber
\end{equation}

\begin{proof}The proof analyzes the approximation error by partitioning the spectral domain and accounting for both discretization and optimization residuals.

The spectral domain $\mathcal{H}_{\text{spec}} = [0, E_{\max}]$ is covered by $K$ anchors $\{\mu_k\}_{k=1}^K$ uniformly spaced by $\delta = \frac{E_{\max}}{K-1}$. For any target graph $G_T$ with energy $\mathcal{E}(G_T) \in [0, E_{\max}]$, we define the index of the nearest anchor as:\begin{equation}k^* = \operatorname*{argmin}_{k \in {1,\dots,K}} | \mathcal{E}(G_T) - \mu_k |.
\end{equation}
Due to the uniform distribution of the grid, the distance from any point in the domain to its closest anchor is naturally bounded:
\begin{equation}
\label{step1}
| \mathcal{E}(G_T) - \mu_{k^*} | \le \frac{\delta}{2}.
\end{equation}

The spectral coverage loss $\mathcal{L}_{\text{span}}$ ensures that each synthesized basis $G_k$ aligns with its corresponding anchor $\mu_k$. Given that $\mathcal{L}_{\text{span}}$ converges with an optimization residual $\epsilon$, for each prototype in the basis set, we have:
\begin{equation}
\label{step2}
| \mathcal{E}(G_k) - \mu_k | \le \epsilon, \quad \forall k \in {1, \dots, K}.\end{equation}

We now bound the distance between the target graph energy $\mathcal{E}(G_T)$ and the energy of the selected basis prototype $\mathcal{E}(G_{k^*})$. By applying the Triangle Inequality, we obtain:
\begin{equation}
|\mathcal{E}(G_T) - \mathcal{E}(G_k)| \leq |\mathcal{E}(G_T) - \mu_k | + | \mu_k - \mathcal{E}(G_k)|.
\end{equation}
Substituting the bounds derived in Eq.~\eqref{step1} and Eq.~\eqref{step2} into the above inequality leads to:
\begin{equation}
| \mathcal{E}(G_T) - \mathcal{E}(G_{k^*}) | \le \frac{\delta}{2} + \epsilon.
\end{equation}

Since $G_{k^*}$ is a member of the synthesized set $\mathcal{S}_{syn}$, the minimum error over the entire set is bounded by the error of this specific instance:
\begin{equation}
\min_{G_k \in \mathcal{S}{syn}} | \mathcal{E}(G_T) - \mathcal{E}(G_k) | \le | \mathcal{E}(G_T) - \mathcal{E}(G_{k^*}) | \le \frac{\delta}{2} + \epsilon.
\end{equation}
The target generalization error is thus theoretically controlled by the basis density $K$ and optimization precision $\epsilon$.\end{proof}

\input{figure/data_vis}
\input{table/dataset}

\section{Proof of Theorem \ref{thm:generalization}}
\label{app:proof_theorem3}

Theorem \ref{thm:generalization} (Generalization Bound via Universal Covering)
\textit{Let $f$ be the graph encoder and $h$ be the hypothesis (classifier). Let $\mathcal{R}_{\mathcal{T}}(h \circ f)$ denote the expected risk on the target domain, and $\hat{\mathcal{R}}_{\mathcal{S}_{syn}}(h \circ f)$ denote the empirical risk on the synthesized universal basis $\mathcal{S}_{syn}$.
Assume the loss function $\ell$ is $L_{\ell}$-Lipschitz continuous with respect to the graph representation, and the encoder $f$ is $L_{f}$-Lipschitz continuous with respect to the Dirichlet energy spectrum.
Then, with probability at least $1-\delta$, the target risk is bounded by:
\begin{equation}
    \mathcal{R}_{\mathcal{T}}(h \circ f) \leq \hat{\mathcal{R}}_{\mathcal{S}_{syn}}(h \circ f) + K_{\text{Lip}} \cdot \mathbb{E}_{G \sim \mathcal{D}_T} \left[ \min_{G_k \in \mathcal{S}_{syn}} | \mathcal{E}(G) - \mathcal{E}(G_k) | \right] + \lambda,\nonumber
\end{equation}
where $K_{\text{Lip}} = L_{\ell} L_{f}$ is the composite Lipschitz constant, and $\lambda$ represents the minimal combined risk of the optimal hypothesis.}


\begin{proof}
Let $\mathcal{D}_T$ be the target domain distribution. Let $h \circ f$ denote the composition of the encoder and the classifier. The expected target risk is defined as $\mathcal{R}_{\mathcal{T}} = \mathbb{E}_{G \sim \mathcal{D}_T}[\ell(h(f(G)), y)]$.
We define a projection mapping $\pi: \mathcal{D}_T \to \mathcal{B}$ that maps any target graph $G$ to its structurally nearest neighbor in the universal basis $\mathcal{B}$:
\begin{equation}
    \pi(G) = \operatorname*{arg\,min}_{\hat{G} \in \mathcal{B}} | \mathcal{E}(G) - \mathcal{E}(\hat{G}) |.
\end{equation}
We invoke the Lipschitz continuity assumptions stated in the theorem:
1. The loss function $\ell$ is $L_{\ell}$-Lipschitz continuous with respect to the feature representation: $|\ell(h(\mathbf{z}_1), y) - \ell(h(\mathbf{z}_2), y)| \leq L_{\ell} \|\mathbf{z}_1 - \mathbf{z}_2\|$.
2. The encoder $f$ is $L_{f}$-Lipschitz continuous with respect to the Dirichlet energy spectrum: $\|f(G_1) - f(G_2)\| \leq L_{f} |\mathcal{E}(G_1) - \mathcal{E}(G_2)|$.

We start by decomposing the loss for an arbitrary target sample $G \sim \mathcal{D}_T$. By adding and subtracting the loss term of its nearest basis prototype $\pi(G)$, we apply the Triangle Inequality:
\begin{equation}
    \ell(h(f(G)), y) \leq \ell(h(f(\pi(G))), y) + | \ell(h(f(G)), y) - \ell(h(f(\pi(G))), y) |.
\end{equation}

We focus on the second term, which represents the deviation caused by structural shifting. Applying the Lipschitz continuity of the loss function $\ell$:
\begin{equation}
    | \ell(h(f(G)), y) - \ell(h(f(\pi(G))), y) | \leq L_{\ell} \| f(G) - f(\pi(G)) \|.
\end{equation}
Next, we apply the Lipschitz continuity of the encoder $f$ with respect to the structural energy:
\begin{equation}
    \| f(G) - f(\pi(G)) \| \leq L_{f} | \mathcal{E}(G) - \mathcal{E}(\pi(G)) |.
\end{equation}
Combining these inequalities, the single-sample loss is bounded by:
\begin{equation}
    \ell(h(f(G)), y) \leq \ell(h(f(\pi(G))), y) + L_{\ell} L_{f} \min_{\hat{G} \in \mathcal{B}} | \mathcal{E}(G) - \mathcal{E}(\hat{G}) |.
\end{equation}

We now take the expectation over the target distribution $G \sim \mathcal{D}_T$. The left side becomes the expected target risk $\mathcal{R}_{\mathcal{T}}$. The right side decomposes into the expected risk on the projected basis samples and the expected covering error:
\begin{equation}
    \mathcal{R}_{\mathcal{T}} \leq \mathbb{E}_{G \sim \mathcal{D}_T} [\ell(h(f(\pi(G))), y)] + L_{\ell} L_{f} \mathbb{E}_{G \sim \mathcal{D}_T} \left[ \min_{\hat{G} \in \mathcal{B}} | \mathcal{E}(G) - \mathcal{E}(\hat{G}) | \right].
\end{equation}

The term $\mathbb{E}_{G \sim \mathcal{D}_T} [\ell(h(f(\pi(G))), y)]$ represents the expected risk on the discrete set of basis prototypes weighted by the target density. By standard statistical learning theory (e.g., McDiarmid's inequality or VC-dimension bounds), the expected risk on the basis is bounded by the empirical risk $\hat{\mathcal{R}}_{\mathcal{B}}$ plus a generalization complexity term (denoted as $\text{const}$ or $\epsilon_{\text{gen}}$) with high probability $1-\delta$:
\begin{equation}
    \mathbb{E}_{G \sim \mathcal{D}_T} [\ell(h(f(\pi(G))), y)] \leq \hat{\mathcal{R}}_{\mathcal{B}}(h \circ f) + \text{const}.
\end{equation}

Substituting Step 3 into Step 2, we arrive at the final bound:
\begin{equation}
    \mathcal{R}_{\mathcal{T}}(h \circ f) \leq \hat{\mathcal{R}}_{\mathcal{B}}(h \circ f) + L_{\ell} L_{f} \cdot \mathbb{E}_{G \sim \mathcal{D}_T} \left[ \min_{\hat{G} \in \mathcal{B}} | \mathcal{E}(G) - \mathcal{E}(\hat{G}) | \right] + \text{const}.
\end{equation}
This confirms that the target generalization error is controlled by the performance on the universal basis and the structural covering discrepancy.
\end{proof}

\section{Dataset}\label{sec:dataset}

\subsection{Dataset Description}

We conduct extensive experiments on a variety of datasets. The statistics of the datasets are summarized in Table \ref{tab:dataset}. The detailed descriptions of these dataset are provided as follows:

(1) For structure-based domain shifts:

\begin{itemize}

\item \textbf{DD.} The DD dataset~\citep{dobson2003distinguishing} consists of 1,178 graphs representing protein structures, where nodes correspond to amino acids and edges encode spatial or chemical proximity. Following ~\cite{yin2025dream}, we divide the dataset into four subsets, denoted as D0, D1, D2, and D3, according to graph-level statistics, specifically edge density and node density.

\item \textbf{NCI1.} The NCI1 dataset~\cite{wale2008comparison} comprises 4,100 molecular graphs, where nodes correspond to atoms and edges represent chemical bonds. Each graph is associated with a binary label indicating whether the molecule inhibits cancer cell growth. Following the DD dataset, we divide the dataset into four subsets, denoted as N0, N1, N2, and N3, based on graph-level statistics, specifically edge density and node density.

\item \textbf{Mutagenicity.} 
The Mutagenicity dataset~\cite{kazius2005derivation} comprises 4,337 molecular graphs, where nodes correspond to atoms and edges represent chemical bonds. Each graph is associated with a binary label indicating whether the compound is mutagenic. Following the DD dataset, we divide the dataset into four subsets, denoted as M0, M1, M2, and M3, based on graph-level statistics, specifically edge density and node density.

\item \textbf{FRANKENSTEIN.} The FRANKENSTEIN dataset~\cite{orsini2015graph} consists of 4,337 molecular graphs, where nodes represent atoms and edges denote chemical bonds. Each graph is labeled by the molecule’s biological activity. Following the DD dataset, we divide the dataset into four subsets, denoted as F0, F1, F2, and F3, based on graph-level statistics, specifically edge density and node density.

\item \textbf{ogbg-molhiv.} The ogbg-molhiv dataset~\cite{hu2021ogblsc} comprises 41,127 molecular graphs, where nodes correspond to atoms and edges represent chemical bonds. Each graph is associated with a binary label indicating whether the molecule exhibits HIV inhibitory activity. Following the DD dataset, we divide the dataset into four subsets, denoted as H0, H1, H2, and H3, based on graph-level statistics, specifically edge density and node density.
\end{itemize}

(2) For feature-based domain shifts:

\begin{itemize}
\item \textbf{PROTEINS.} The PROTEINS dataset~\cite{dobson2003distinguishing} consists of 1,113 protein graphs, each annotated with a binary label indicating whether the protein is an enzyme. In each graph, nodes correspond to amino acids, and edges connect amino acids that are within 6~\AA{} of each other along the sequence. Compared to the DD dataset, graphs in PROTEINS are generally smaller and sparser, resulting in reduced structural complexity while preserving comparable semantic labels.

\item \textbf{COX2.} The COX2 dataset~\cite{dobson2003distinguishing} consists of 467 molecular graphs, while COX2\_MD contains 303 structurally modified molecular graphs. In both datasets, nodes represent atoms and edges correspond to chemical bonds. The COX2\_MD dataset introduces controlled structural variations relative to COX2 while preserving the underlying semantic labels.

\item \textbf{BZR.} The BZR dataset~\cite{dobson2003distinguishing} consists of 405 molecular graphs, while BZR\_MD contains 306 structurally modified graphs derived from BZR. In both datasets, nodes represent atoms and edges correspond to chemical bonds. The BZR\_MD dataset introduces controlled structural variations to simulate domain shifts while preserving consistent label semantics.

\end{itemize}

(3) For correlation shifts:

\begin{itemize}

\item \textbf{Spurious-Motif.} The Spurious-Motif dataset~\cite{wu2022discovering} is a synthetic dataset designed to evaluate model robustness under structural correlation shifts. Each graph is constructed by attaching a label-determining motif (e.g., house, cycle) to a base graph (e.g., tree, ladder), which serves as the environmental background. Node features are intentionally uninformative, forcing models to rely exclusively on topological structure. We construct two variants within this dataset, each containing 3,000 graphs: a biased variant, denoted as Spurious-Motif\_bias, in which specific motifs are strongly correlated with particular base graphs, inducing spurious structural shortcuts; and an unbiased variant, denoted as Spurious-Motif, where motifs and base graphs are combined uniformly at random.

\end{itemize}

\subsection{Data Processing}

For datasets from TUDataset~\footnote{https://chrsmrrs.github.io/datasets/} (e.g., Mutagenicity and FRANKENSTEIN), we adopt the standard preprocessing and normalization procedures provided by PyTorch Geometric~\footnote{https://pyg.org/} for loading dataset. For datasets from the Open Graph Benchmark (OGB)~\footnote{https://ogb.stanford.edu/}, such as ogbg-molhiv, we follow the official OGB preprocessing and normalization protocols. For the synthetic Spurious-Motif dataset, we follow the generation and processing protocols described in~\cite{wu2022discovering} and convert the synthesized graphs into standard PyTorch Geometric data objects. Node features are initialized as uninformative constant vectors with dimensionality $d=1$, ensuring that models rely exclusively on graph topology. The dataset is further organized into biased and unbiased variants, which serve as the source and target domains, respectively.

\section{Baselines}\label{sec:baselines}

\subsection{Baseline Description}

In this part, we introduce the details of the compared baselines as follows:

(1) \textbf{Graph kernel methods.} We compare \method{} with two graph kernel methods: 
\begin{itemize}
    \item \textbf{WL subtree}: WL subtree \citep{shervashidze2011weisfeiler} is a hierarchical kernel method that iteratively aggregates node neighborhood labels to capture graph structural similarities efficiently and discriminatively.
    \item \textbf{PathNN}: PathNN~\citep{michel2023path} is an expressive graph neural architectures that aggregate information over simple paths rather than immediate neighborhoods, enabling accurate modeling of higher-order structural patterns and improving representational power beyond standard message-passing GNNs.
    \end{itemize}

(2) \textbf{General Graph Neural Networks.} We compare \method{} with four general graph neural networks: 
    \begin{itemize}
        \item \textbf{GCN}: GCN \citep{kipf2017semi} is a graph neural network that performs localized feature aggregation by applying convolution-like operations over graph neighborhoods, enabling effective learning of node representations through iterative message passing and feature smoothing. 
        \item \textbf{GIN}: GIN \citep{xu2018powerful} is a graph neural network that employs sum aggregation and multilayer perceptrons to achieve maximal expressive power among message-passing GNNs, matching the discriminative capability of the Weisfeiler–Lehman graph isomorphism test.
        \item \textbf{CIN}: CIN \citep{bodnar2021weisfeiler} is is a cellular graph neural architecture that generalize the Weisfeiler–Lehman framework by operating on higher-dimensional cells, enabling more expressive and topology-aware representations than standard message-passing GNNs.
        \item \textbf{GMT}: GMT \citep{baek2021accurate} is a graph neural network framework that learns accurate graph-level representations by modeling node embeddings as multisets and applying expressive, permutation-invariant pooling operators.
    \end{itemize}

(3) \textbf{Spectral-based Graph Neural Networks.} We compare \method{} with two spectral-based graph neural networks: 

\begin{itemize}
    \item \textbf{JacobiConv}: JacobiConv~\cite{wang2022powerful} is a graph representation model that operates in the graph spectral domain, analyzing their expressive power in capturing structural patterns through eigenvalue-based filtering and frequency-aware transformations.
    \item \textbf{AutoSGNN}: AutoSGNN~\cite{mo2025autosgnn} is a spectral graph neural network framework that automatically discovers effective propagation mechanisms by learning adaptive spectral filters, improving representation quality without manual design of graph convolution operators.
\end{itemize}

(4) \textbf{General Source Free Domain Adaptation methods.} We compare \method{} with three general source free domain adaptation methods: 

\begin{itemize}
\item \textbf{SFDA\_LLN:} SFDA\_LLN~\citep{yi2023source} addresses source-free domain adaptation under noisy labels by leveraging self-training and noise-robust learning objectives, enabling effective target-domain adaptation without access to source data while mitigating label noise.
    \item \textbf{SF(DA)$^2$}: SF(DA)$^2$~\citep{hwang2024sf} is a source-free domain adaptation method that leverages data augmentation to generate diverse target views, enabling effective adaptation by enforcing prediction consistency without requiring access to source-domain data.
    \item \textbf{NVC-LLN:} NVC-LLN~\citep{xu2025unraveling} investigates source-free domain adaptation under label noise, combining theoretical analysis with practical algorithms to characterize noise effects and design robust adaptation strategies that improve performance on unlabeled target domains.
\end{itemize}

(5) \textbf{Source Free Graph Domain Adaptation methods.} We compare \method{} with three source free graph domain adaptation methods: 

\begin{itemize}
    \item \textbf{SOGA:} SOGA~\citep{mao2024source}  is a framework that adapts graph neural networks to target domains without source data by leveraging self-training, pseudo-labeling, and structure-aware alignment to mitigate distribution shifts across graph domains.
    \item \textbf{GraphCTA:} GraphCTA~\citep{zhang2024collaborate} is a source-free graph domain adaptation method that performs bi-directional adaptation between a target model and auxiliary counterparts, using mutual consistency and pseudo-label refinement to progressively align representations under cross-domain graph shifts.
    \item \textbf{GALA:} GALA~\citep{luo2024gala} is a source-free graph domain adaptation method that aligns target-domain representations via graph diffusion and a jigsaw-based self-supervised task, enabling robust adaptation without access to source data.
\end{itemize}

\subsection{Implementation Details}

We implement all baseline methods and conduct all experiments on NVIDIA A100 GPUs to ensure fair comparisons. All methods are evaluated under the same hyperparameter configuration as the proposed \method{}, using the Adam optimizer with a learning rate of $1\times10^{-3}$, a weight decay of $1\times10^{-12}$, a hidden embedding dimension of 128, and three GNN layers. Following ~\citep{yin2022deal, yin2023coco}, all models are trained using labeled source-domain data and adapted/evaluated on unlabeled target-domain data. We report classification accuracy on TUDataset benchmarks (e.g., DD and FRANKENSTEIN) and the Spurious-Motif dataset, and AUC scores on OGB datasets (e.g., ogbg-molhiv), with results averaged over five independent runs.

\section{Complexity Analysis}

In this part, we analyze the computational complexity of the proposed \method{}, which comprises the offline bi-level universal basis distillation and the online spectral-aware inference. Let $K$ denote the number of synthetic basis graphs, each with $N'$ nodes and $E'$ edges, $d$ the embedding dimension, and $L$ the number of GNN layers. During the offline distillation phase, the inner loop updates the proxy model via cross-entropy loss, incurring a cost of $\mathcal{O}(I_{inner} \cdot K \cdot L \cdot (E' \cdot d + N' \cdot d^2))$, where $I_{inner}$ is the number of inner steps. In the outer loop, the spectral span loss requires calculating Dirichlet energies with a complexity of $\mathcal{O}(K \cdot (E' \cdot d + N' \cdot d^2))$, while the structural diversity loss involves computing pairwise Gromov-Wasserstein (GW) distances; assuming an implementation via entropic regularization with $I_{GW}$ iterations, this introduces an additional overhead of $\mathcal{O}(K^2 \cdot I_{GW} \cdot (N')^2)$. Although the quadratic complexity of GW with respect to the basis size is computationally intensive, this operation is only performed exclusively during the one-time pre-training phase. During the inference stage, computing the spectral fingerprint of target graphs and performing spectral weighting require $\mathcal{O}(E_T \cdot d + N_T \cdot d^2 + K \cdot d)$ time, while the final prediction on a target graph with $N_T$ nodes and $E_T$ edges takes $\mathcal{O}(L \cdot (E_T \cdot d + N_T \cdot d^2))$.

\section{More experimental results}

\subsection{More performance comparison}\label{sec:model performance}

In this section, we present additional experimental results comparing the proposed \method{} with all baseline methods across various datasets, as summarized in Tables~\ref{tab:dd_idx}–\ref{tab:hiv_node}. The results show that \method{} consistently outperforms the baselines in most transfer settings, further demonstrating its effectiveness and robustness.

\subsection{More Visualizations}\label{sec:vis}

In this part, we provide additional visualizations to further support the analyses presented in the main paper. Specifically, Figure~\ref{fig:generalization_tsne_mutag_more} reports the t-SNE visualizations of additional baselines mentioned in Section~\ref{sec:efficiency}, including NVC\_LLN, GraphCTA, and GALA. In addition, Figure~\ref{fig:vis_graph} presents the complete set of synthesized Universal Structural Basis graphs introduced in Section~\ref{sec:vis_graph}. These visualizations corroborate the findings in Section~\ref{sec:vis_graph}, confirming that the learned basis spans diverse spectral regimes and structural patterns.

\subsection{More Ablation study}\label{sec:ablation study}

To validate the effectiveness of each component in \method{}, we further conduct ablation studies on the DD, NCI1, FRANKENSTEIN and ogbg-molhiv datasets. Specifically, we evaluate four ablated variants of \method{} by removing key modules, including \method{} w/o SE, \method{} w/o SP, \method{} w/o DI, and \method{} w/o AD. The corresponding experimental results are reported in Tables~\ref{tab:ablation_dd}, \ref{tab:ablation_frankenstein}, \ref{tab:ablation_nci1} and \ref{tab:ablation_molhiv}. Overall, the observed performance trends are consistent with those discussed in Section~\ref{sec:ablation}, further confirming the contribution of each component to the overall effectiveness of \method{}.

\subsection{More Sensitivity Analysis}\label{sec:sensitive analysis}

In this part, we further analysis the sensitivity of \method{} to the number of synthetic bases $K$ and balance coefficient ($\lambda_1$, $\lambda_2$) on the DD, NCI1, FRANKENSTEIN and ogbg-molhiv datasets. The results, summarized in Figure~\ref{fig:hyper_coef} and ~\ref{fig:hyper_k}, exhibit trends consistent with the analysis presented in Section~\ref{sec:sensitivity}.

\begin{figure*}[t]
    \centering
    \begin{subfigure}[t]{0.32\textwidth}
        \centering
        \includegraphics[width=\linewidth]{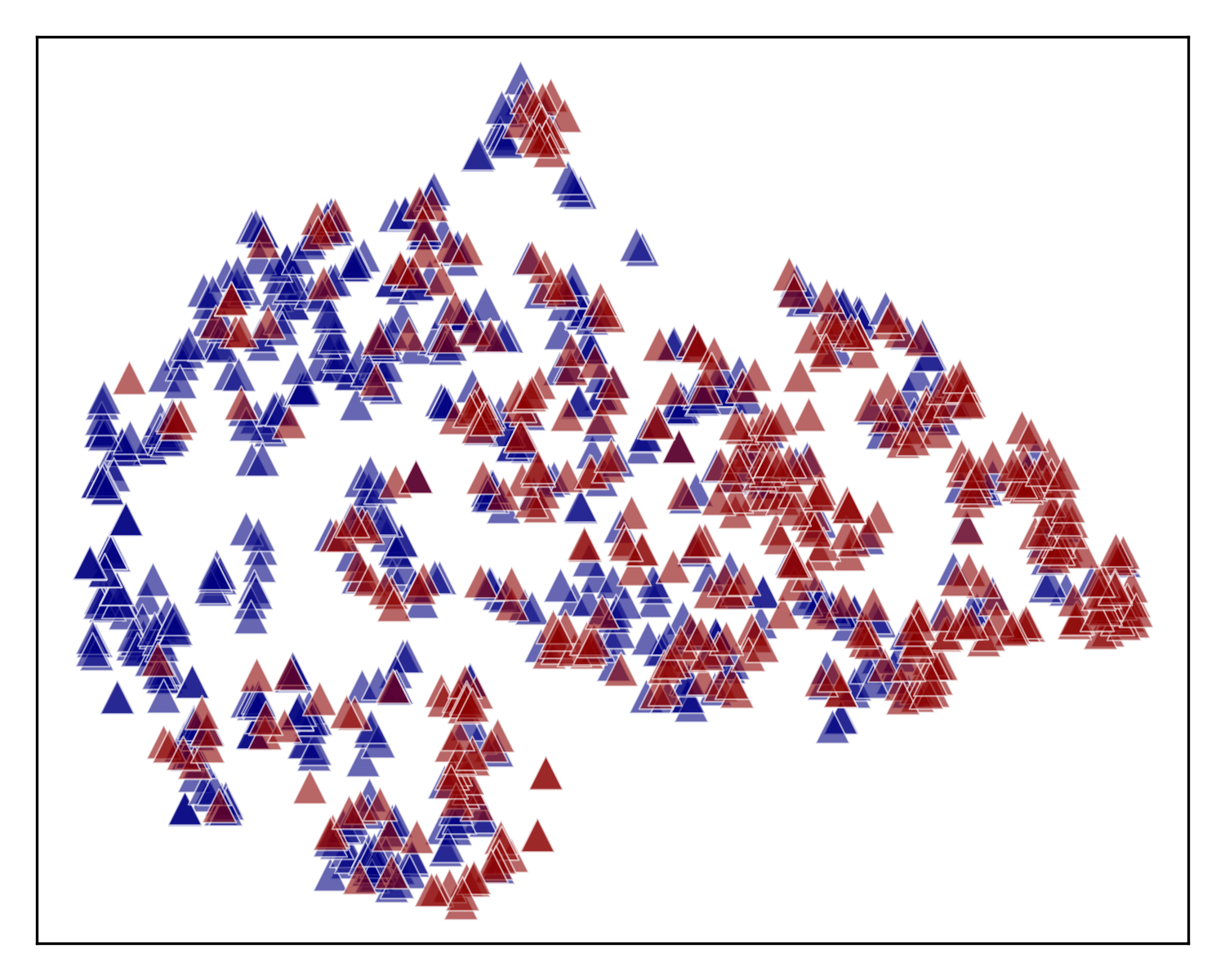}
        \caption{NVC\_LLN}
    \end{subfigure}
    \hfill
    \begin{subfigure}[t]{0.32\textwidth}
        \centering
        \includegraphics[width=\linewidth]{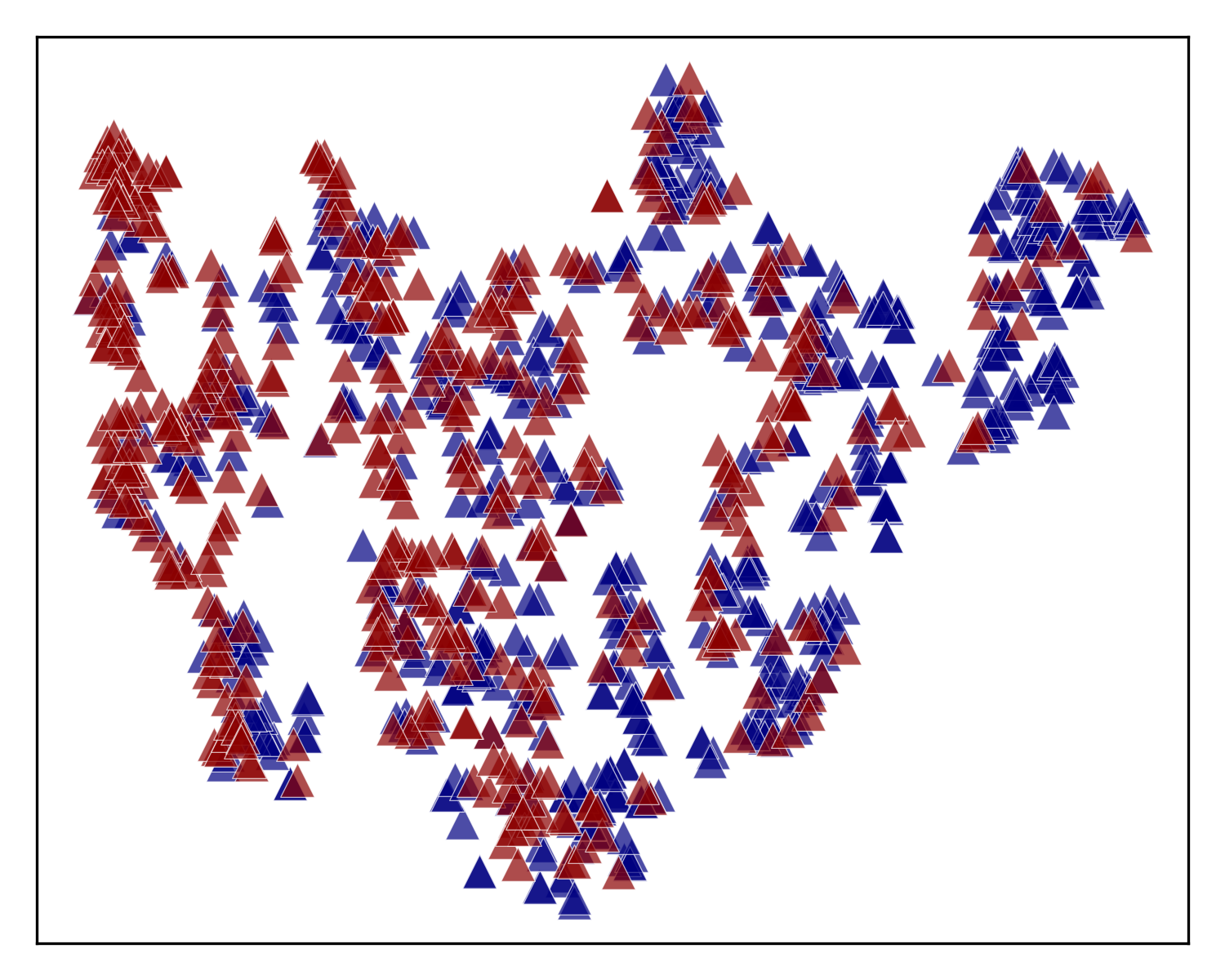}
        \caption{GraphCTA}
    \end{subfigure}
    \hfill
    \begin{subfigure}[t]{0.32\textwidth}
        \centering
        \includegraphics[width=\linewidth]{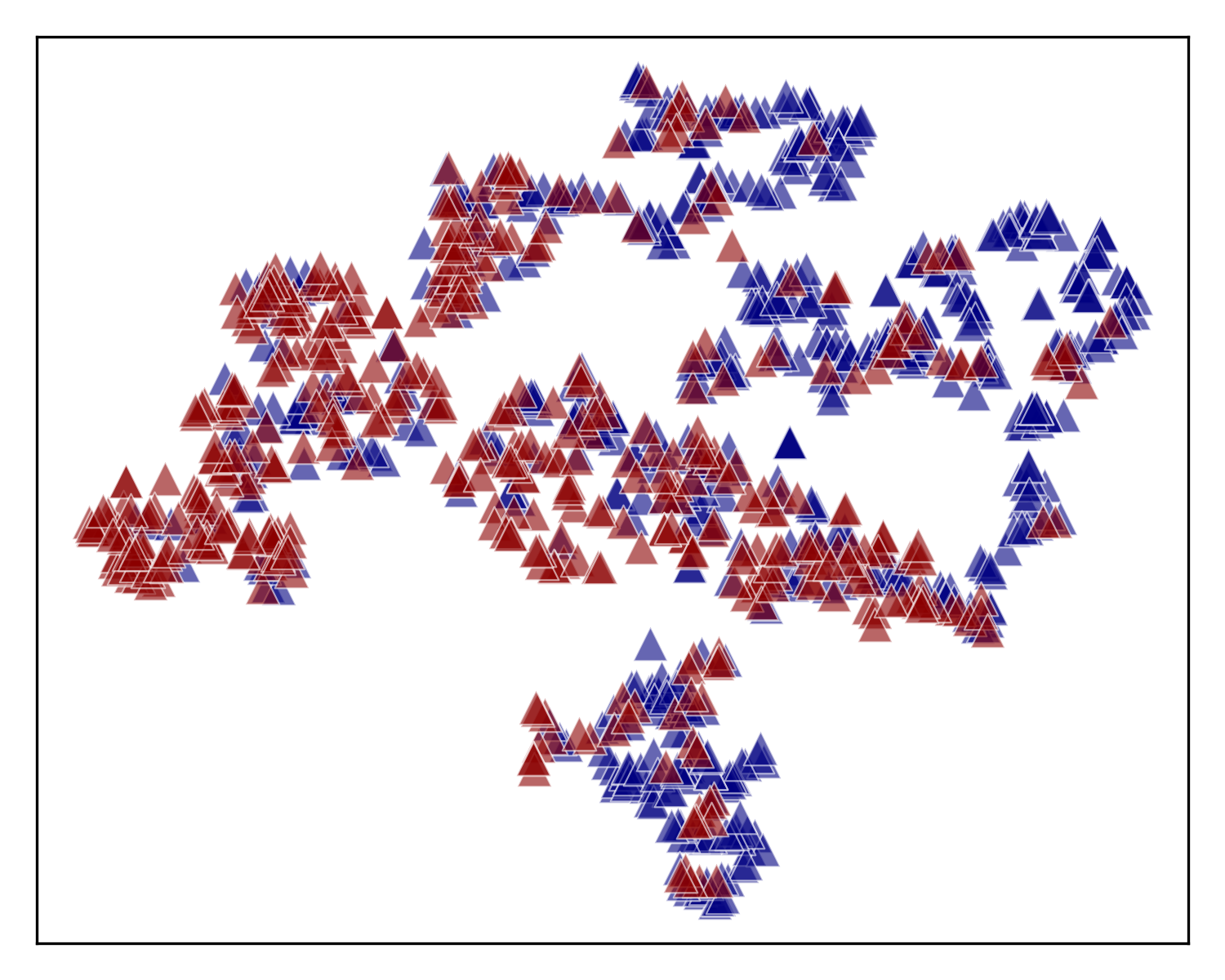}
        \caption{GALA}
    \end{subfigure}
    \caption{T-SNE visualizations on M2 of the Mutagenicity dataset for additional baselines trained/adapted on M0$\rightarrow$M1.}
    \label{fig:generalization_tsne_mutag_more}
\end{figure*}

\begin{figure*}[t]
\centering
\includegraphics[scale=0.58]{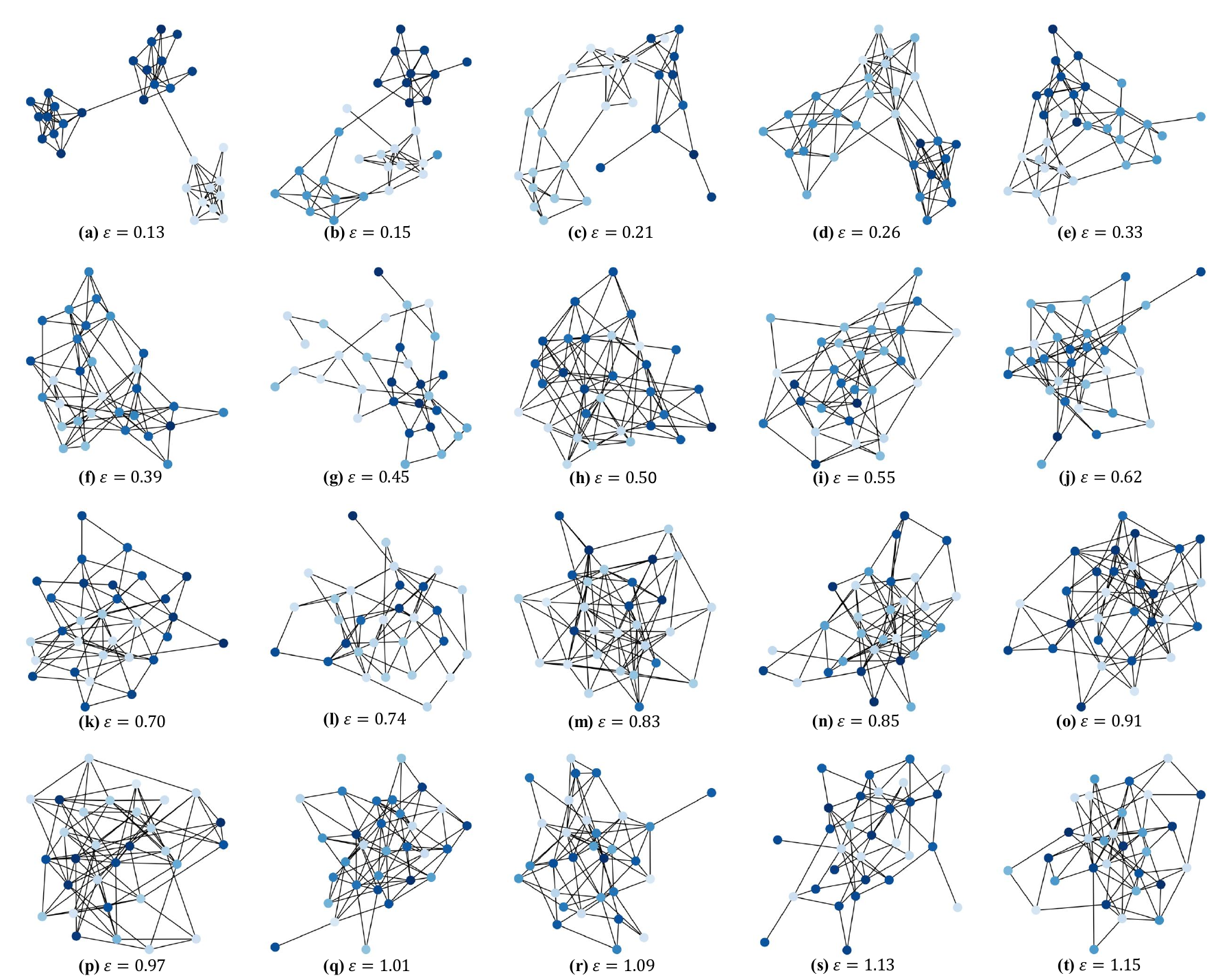}
\caption{Full visualizations of the universal structural basis with different Dirichlet energy $\mathcal{E}$ on the Mutagenicity dataset.}
\label{fig:vis_graph}
\end{figure*}

\input{figure/hyper_coeffiecent}
\input{figure/hyper_k}

\input{table/dd_ablation}
\input{table/frank_ablation}
\input{table/mutag_ablation}
\input{table/nci1_ablation}
\input{table/molhiv_ablation}
\input{table/dd_idx}
\input{table/dd_node}
\input{table/frank_idx}
\input{table/frank_node}

\input{table/mutag_idx}
\input{table/mutag_node}
\input{table/nci1_idx}
\input{table/nci1_node}
\input{table/molhiv_idx}
\input{table/molhiv_node}

·

%% file: figure/data_vis.tex
\begin{figure*}[h]
\includegraphics[width=1.0\linewidth]{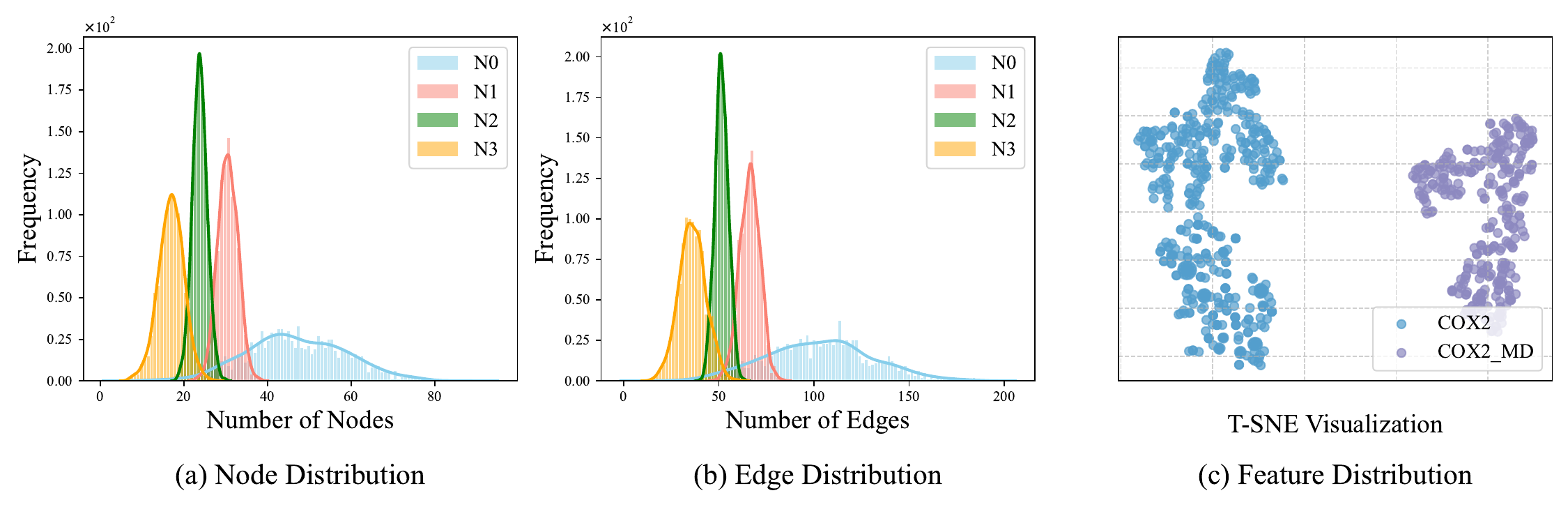}
    \caption{Visualization of domain shifts across different types. (a) Node distribution shift between sub-datasets of NCI1. (b) Edge distribution shift between sub-datasets of NCI1. (c) Feature distribution shift between COX2 and COX2\_MD datasets.}
    \label{fig:shift}
\end{figure*}

%% file: table/dataset.tex
\begin{table*}[h]
    \centering
    \caption{Statistics of the experimental datasets.}
    \begin{tabular}{lcccc}
        \toprule
        Datasets      & Graphs & Avg. Nodes & Avg. Edges & Classes \\
        \midrule
        DD & 1,178 & 284.32 & 715.66 & 2 \\
        NCI1  & 4,110   & 29.87     & 32.30     & 2       \\
        Mutagenicity  & 4,337   & 30.32      & 30.77      & 2       \\
        FRANKENSTEIN  & 4,337   & 16.9       & 17.88      & 2       \\
        ogbg-molhiv & 41,127 & 25.5 & 27.5 & 2 \\
        \midrule
        Spurious-Motif & 3,000 & 87.78 & 124.76 & 3 \\
        Spurious-Motif\_bias & 3,000 & 18.67 & 27.83 & 3 \\
        
        \midrule
        PROTEINS  & 1,113   & 39.1      & 72.8      & 2       \\
        COX2 & 467 & 41.22 & 43.45& 2 \\
        COX2\_MD & 303 & 26.28 &	335.12 & 2\\
        BZR & 405 & 35.75 & 38.36 & 2 \\
        BZR\_MD & 306 & 21.30 &	225.06 & 2 \\
        \bottomrule
    \end{tabular}
    \label{tab:dataset}
\end{table*}

%% file: figure/hyper_coeffiecent.tex
\begin{figure*}[t]

    \centering
    \captionsetup[subfigure]{font=scriptsize} 
    \begin{subfigure}[t]{0.24\textwidth}
        \centering
        \includegraphics[width=\linewidth]{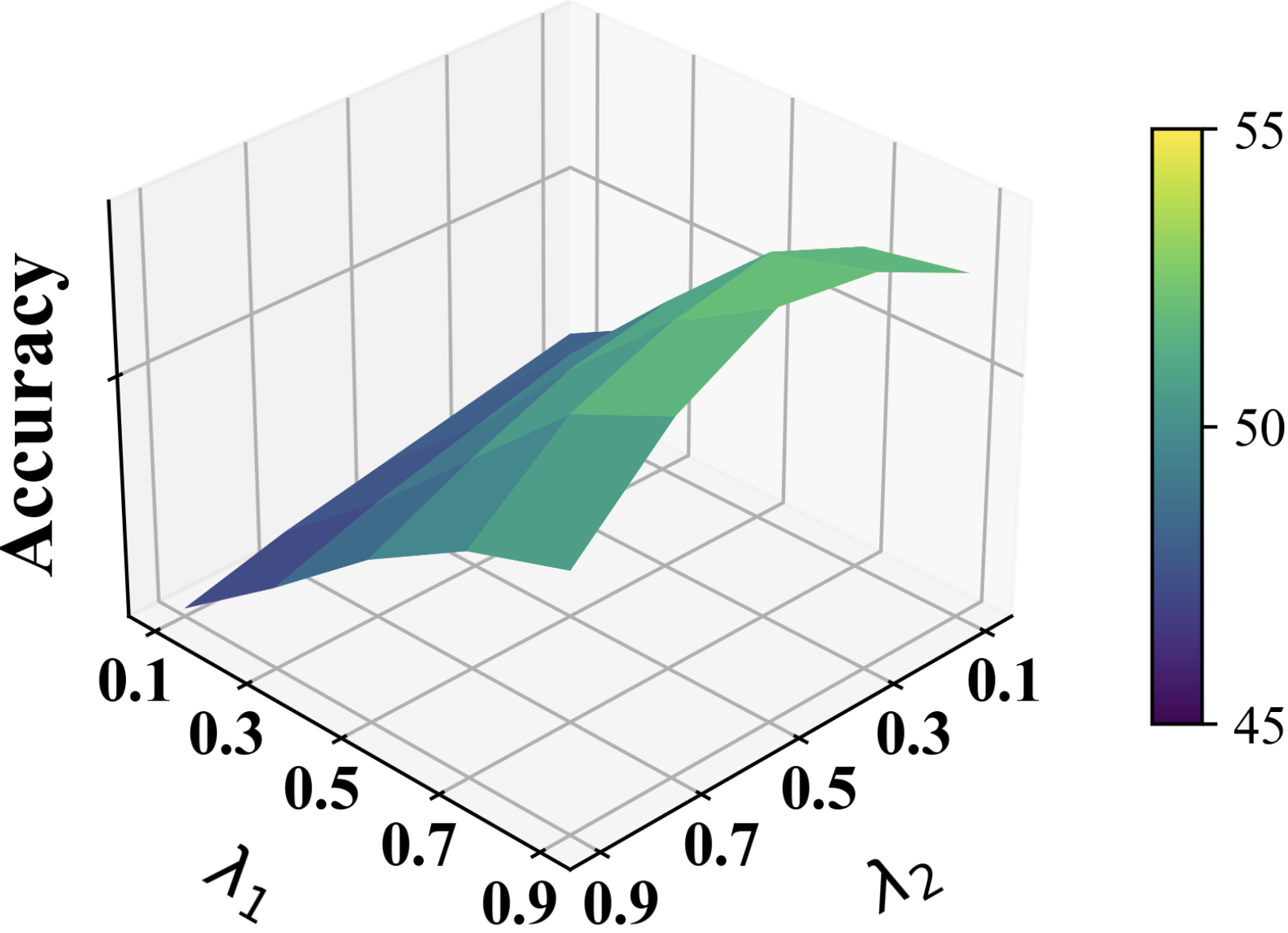}
        \caption{DD}
    \end{subfigure}
    \hfill
    \begin{subfigure}[t]{0.24\textwidth}
        \centering
        \includegraphics[width=\linewidth]{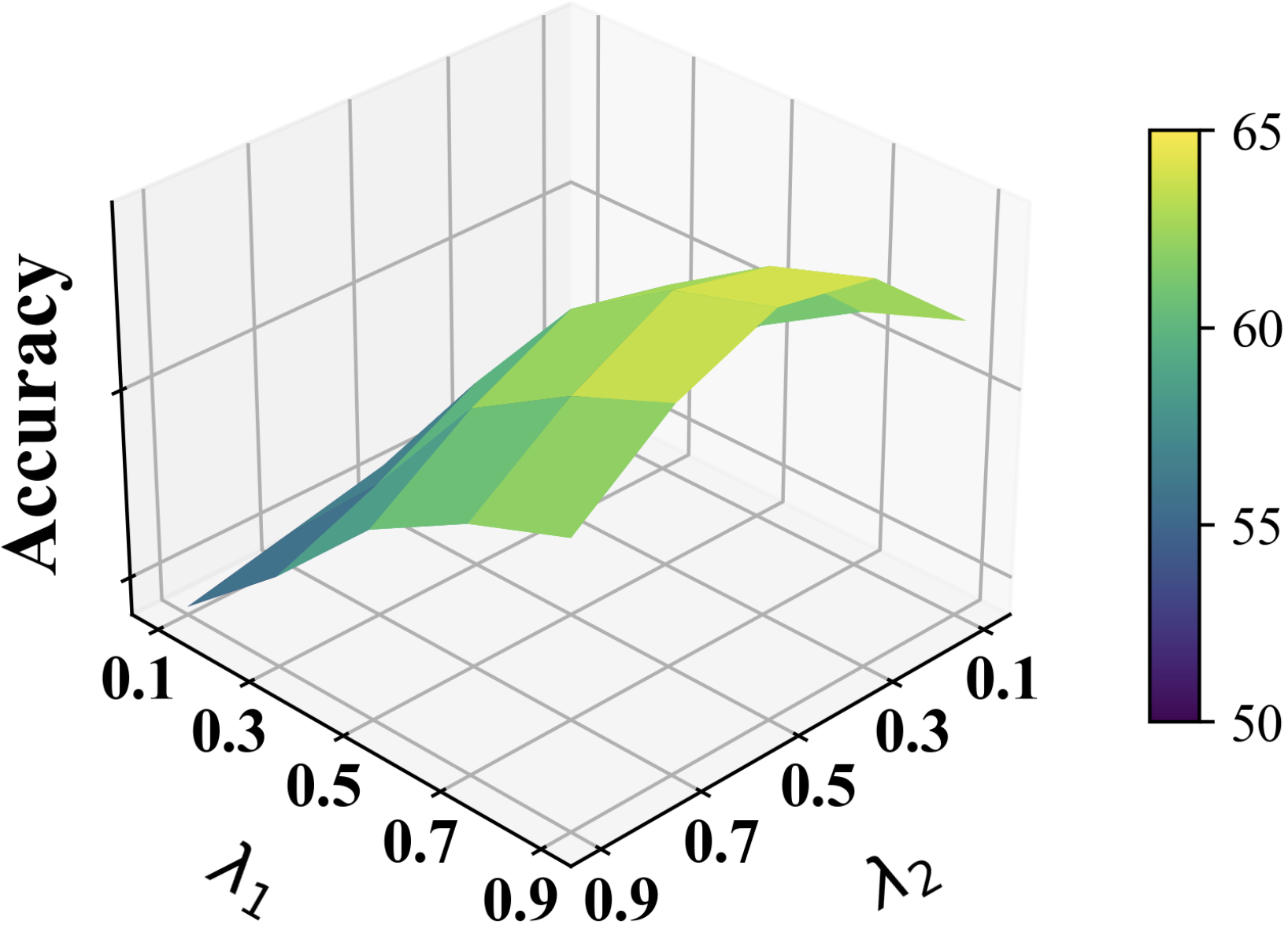}
        \caption{FRANKENSTEIN}
    \end{subfigure}
    \hfill
    \begin{subfigure}[t]{0.24\textwidth}
        \centering
        \includegraphics[width=\linewidth]{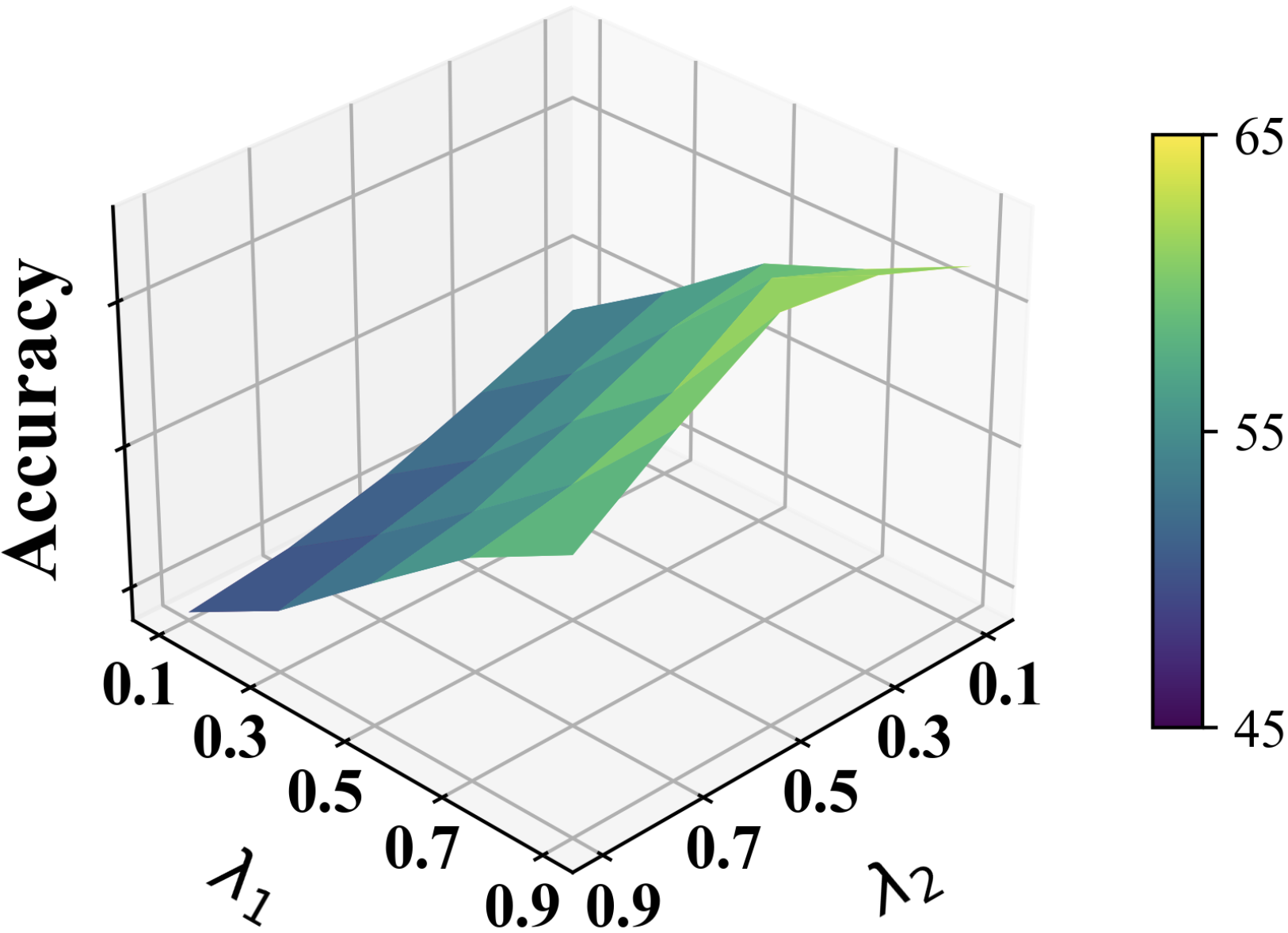}
        \caption{NCI1}
    \end{subfigure}
    \hfill
    \begin{subfigure}[t]{0.24\textwidth}
        \centering
        \includegraphics[width=\linewidth]{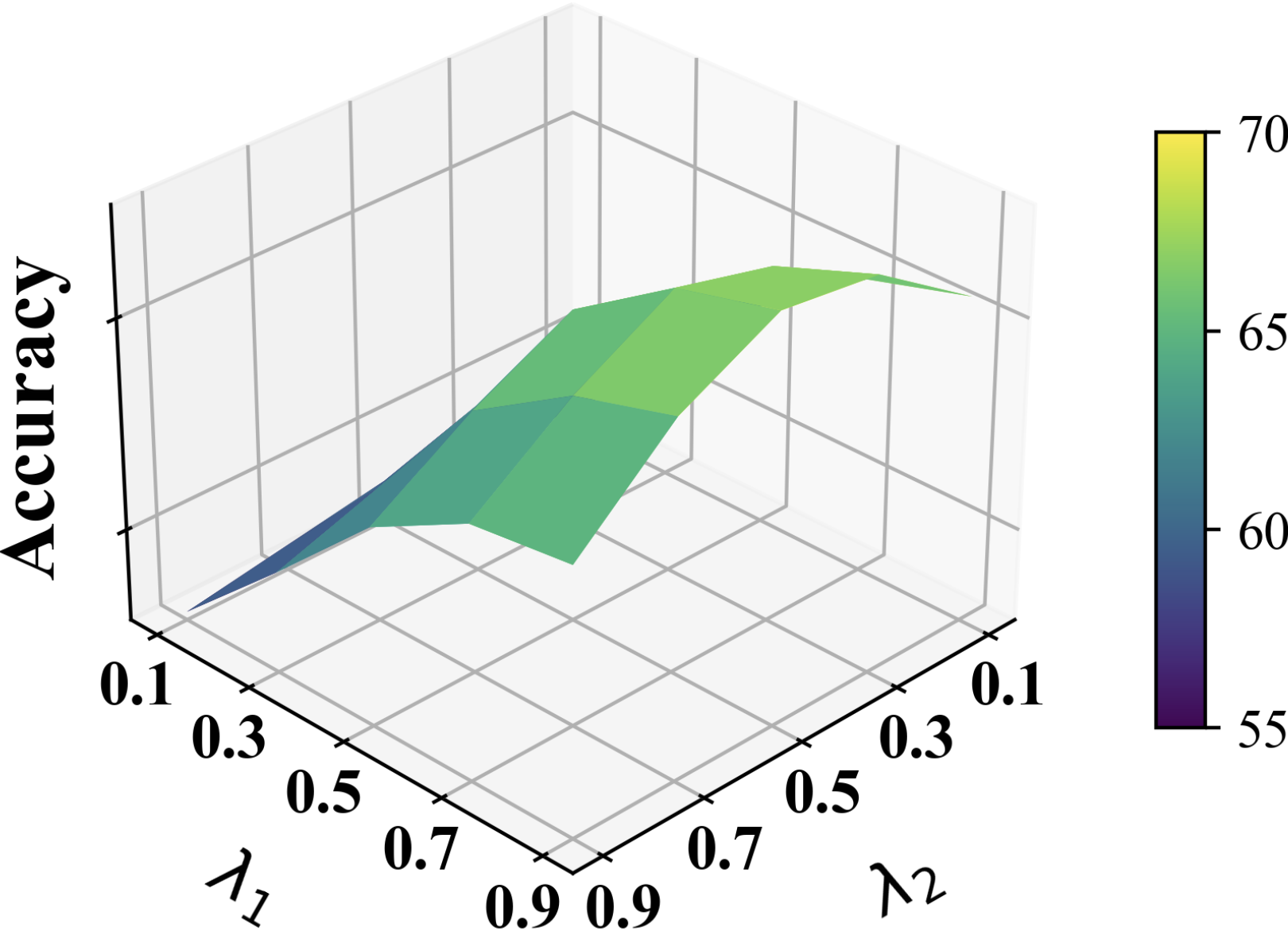}
        \caption{ogbg-molhiv}
    \end{subfigure}
    \caption{Hyperparameter sensitivity analysis of balance coefficient ($\lambda_\text{div}$, $\lambda_\text{span}$) on the DD, FRANKENSTEIN, NCI1 and ogbg-molhiv datasets.}
    \label{fig:hyper_coef}
\end{figure*}

%% file: figure/hyper_k.tex
\begin{figure*}[t]

    \centering
    \captionsetup[subfigure]{font=scriptsize} 
    \begin{subfigure}[t]{0.24\textwidth}
        \centering
        \includegraphics[width=\linewidth]{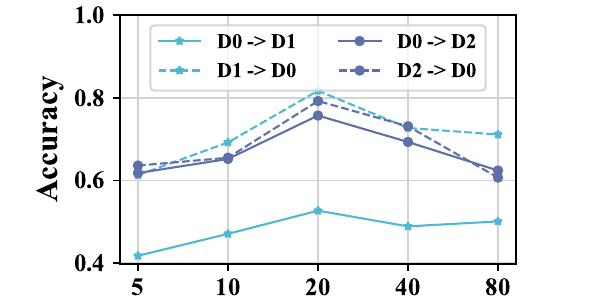}
        \caption{DD}
    \end{subfigure}
    \hfill
    \begin{subfigure}[t]{0.24\textwidth}
        \centering
        \includegraphics[width=\linewidth]{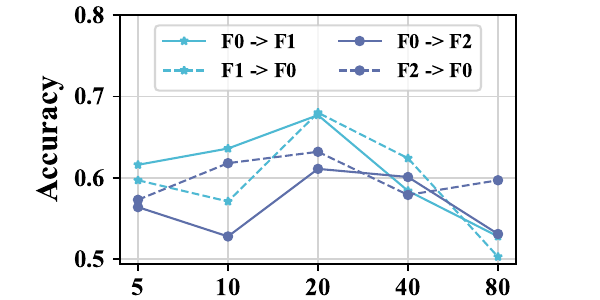}
        \caption{FRANKENSTEIN}
    \end{subfigure}
    \hfill
    \begin{subfigure}[t]{0.24\textwidth}
        \centering
        \includegraphics[width=\linewidth]{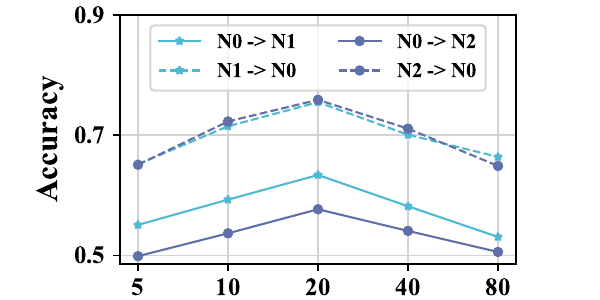}
        \caption{NCI1}
    \end{subfigure}
    \hfill
    \begin{subfigure}[t]{0.24\textwidth}
        \centering
        \includegraphics[width=\linewidth]{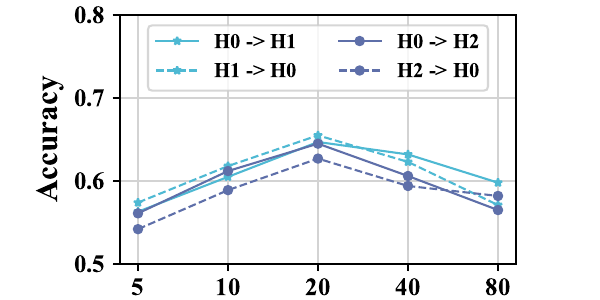}
        \caption{ogbg-molhiv}
    \end{subfigure}
    \caption{Hyperparameter sensitivity analysis of the number of synthetic bases
$K$ on the DD, FRANKENSTEIN, NCI1 and ogbg-molhiv datasets.}
    \label{fig:hyper_k}
\end{figure*}

%% file: table/dd_ablation.tex
\begin{table*}[ht]
\small
\centering
\caption{The results of ablation studies on the DD dataset (source $\rightarrow$ target). \textbf{Bold} results indicate the best performance.}
\resizebox{1.0\textwidth}{!}{

}
\label{tab:ablation_dd}
\end{table*}

%% file: table/frank_ablation.tex
\begin{table*}[ht]
\small
\centering
\caption{The results of ablation studies on the FRANKENSTEIN dataset (source $\rightarrow$ target). \textbf{Bold} results indicate the best performance.}
\resizebox{1.0\textwidth}{!}{
%
}
\label{tab:ablation_frankenstein}
\end{table*}

%% file: table/mutag_ablation.tex
\begin{table*}[ht]
\small
\centering
\caption{The results of ablation studies on the Mutagenicity dataset (source $\rightarrow$ target). \textbf{Bold} results indicate the best performance.}
\resizebox{1.0\textwidth}{!}{
%
}
\label{tab:ablation_mutag}
\end{table*}

%% file: table/nci1_ablation.tex
\begin{table*}[ht]
\small
\centering
\caption{The results of ablation studies on the NCI1 dataset (source $\rightarrow$ target). \textbf{Bold} results indicate the best performance.}
\resizebox{1.0\textwidth}{!}{
%
}
\label{tab:ablation_nci1}
\end{table*}

%% file: table/molhiv_ablation.tex
\begin{table*}[ht]
\small
\centering
\caption{The results of ablation studies on the ogbg-molhiv dataset (source $\rightarrow$ target). \textbf{Bold} results indicate the best performance.}
\resizebox{1.0\textwidth}{!}{
%
}
\label{tab:ablation_molhiv}
\end{table*}

%% file: table/dd_idx.tex
\begin{table*}[t]
\small
\caption{The classification results (in \%) on the DD dataset under edge density domain shift (source $\rightarrow$ target). D0, D1, D2, and D3 denote the sub-datasets partitioned with edge density. \textbf{Bold} results indicate the best performance.}
\resizebox{1.0\textwidth}{!}{
%
}
\label{tab:dd_idx}
\end{table*}

%% file: table/dd_node.tex
\begin{table*}[t]
\small
\caption{The classification results (in \%) on the DD dataset under node density domain shift (source $\rightarrow$ target). D0, D1, D2, and D3 denote the sub-datasets partitioned with node density. \textbf{Bold} results indicate the best performance.}
\resizebox{1.0\textwidth}{!}{
%
}
\label{tab:dd_node}
\end{table*}

%% file: table/frank_idx.tex
\begin{table*}[t]
\small
\caption{The classification results (in \%) on the FRANKENSTEIN dataset under edge density domain shift (source $\rightarrow$ target). F0, F1, F2, and F3 denote the sub-datasets partitioned with edge density. \textbf{Bold} results indicate the best performance.}
\resizebox{1.0\textwidth}{!}{
%
}
\label{tab:frank_idx}
\end{table*}

%% file: table/frank_node.tex
\begin{table*}[t]
\small
\caption{The classification results (in \%) on the FRANKENSTEIN dataset under node density domain shift (source $\rightarrow$ target). F0, F1, F2, and F3 denote the sub-datasets partitioned with node density. \textbf{Bold} results indicate the best performance.}
\resizebox{1.0\textwidth}{!}{
%
}
\label{tab:frank_node}
\end{table*}

%% file: table/mutag_idx.tex
\begin{table*}[t]
\small
\caption{The classification results (in \%) on the Mutagenicity dataset under edge density domain shift (source $\rightarrow$ target). M0, M1, M2, and M3 denote the sub-datasets partitioned with edge density. \textbf{Bold} results indicate the best performance.}
\resizebox{1.0\textwidth}{!}{
%
}
\label{tab:mutag_idx}
\end{table*}

%% file: table/mutag_node.tex
\begin{table*}[t]
\small
\caption{The classification results (in \%) on the Mutagenicity dataset under node density domain shift (source $\rightarrow$ target). M0, M1, M2, and M3 denote the sub-datasets partitioned with node density. \textbf{Bold} results indicate the best performance.}
\resizebox{1.0\textwidth}{!}{
%
}
\label{tab:mutag_node}
\end{table*}

%% file: table/nci1_idx.tex
\begin{table*}[t]
\small
\caption{The classification results (in \%) on the NCI1 dataset under edge density domain shift (source $\rightarrow$ target). N0, N1, N2, and N3 denote the sub-datasets partitioned with edge density. \textbf{Bold} results indicate the best performance.}
\resizebox{1.0\textwidth}{!}{
%
}
\label{tab:nci_idx}
\end{table*}

%% file: table/nci1_node.tex
\begin{table*}[t]
\small
\caption{The classification results (in \%) on the NCI1 dataset under node density domain shift (source $\rightarrow$ target). N0, N1, N2, and N3 denote the sub-datasets partitioned with node density. \textbf{Bold} results indicate the best performance.}
\resizebox{1.0\textwidth}{!}{
%
}
\label{tab:nci1_node}
\end{table*}

%% file: table/molhiv_idx.tex
\begin{table*}[t]
\small
\caption{The classification results (in \%) on the ogbg-molhiv dataset under edge density domain shift (source $\rightarrow$ target).  H0, H1, H2, and H3 denote the sub-datasets partitioned with edge density. \textbf{Bold} results indicate the best performance.}
\resizebox{1.0\textwidth}{!}{
%
}
\label{tab:hiv_idx}
\end{table*}

%% file: table/molhiv_node.tex
\begin{table*}[t]
\small
\caption{The classification results (in \%) on the ogbg-molhiv dataset under node density domain shift (source $\rightarrow$ target).  H0, H1, H2, and H3 denote the sub-datasets partitioned with node density. \textbf{Bold} results indicate the best performance.}
\resizebox{1.0\textwidth}{!}{
%
}
\label{tab:hiv_node}
\end{table*}